\def\eqref#1{equation~\ref{#1}}
\def\1{\bm{1}}
\DeclareMathAlphabet{\mathsfit}{\encodingdefault}{\sfdefault}{m}{sl}
\SetMathAlphabet{\mathsfit}{bold}{\encodingdefault}{\sfdefault}{bx}{n}
\def\norm#1{\Vert#1\Vert}
\newtheorem{prop}{Proposition}
\newtheorem{rem}{Remark}
\newtheorem{thm}{Theorem}
\newtheorem{lem}{Lemma}
\newtheorem{cor}{Corollary}
\title{Towards Deepening Graph Neural Networks: A GNTK-based Optimization Perspective}
\newcommand*\samethanks[1][\value{footnote}]{\footnotemark[#1]}
\author{Wei Huang \thanks{Equal Contribution.} 
~\thanks{Work partially performed while at The University of Sydney}
\\
University of Technology Sydney \\
\texttt{weihuang.uts@gmail.com} \\
\And 
Yayong Li \samethanks[1]\\
University of Technology Sydney \\
\texttt{yayong.li@student.uts.edu.au}\\
\And
Weitao Du\\
Northeastern University \\
\texttt{weitao.du@northwestern.edu}
\And 
\hspace{-1.45cm}Jie Yin \hspace{1.45cm}\\
\hspace{-1.45cm}The University of Sydney \hspace{1.45cm}\\
\hspace{-1.45cm}\texttt{jie.yin@sydney.edu.au}\hspace{1.45cm}
\And
Richard Yi Da Xu \& Ling Chen\\
University of Technology Sydney \\
\texttt{\{YiDa.Xu,ling.chen\}@uts.edu.au} \\
\And
\hspace{-1.6cm}Miao Zhang \hspace{1.6cm}\\
\hspace{-1.6cm}Aalborg University \hspace{1.6cm}\\
\hspace{-1.6cm}\texttt{miaoz@cs.aau.dk} \hspace{1.6cm}\\
}
\begin{document}

\maketitle

\vspace{-0.2em}

\begin{abstract}
Graph convolutional networks (GCNs) and their variants have achieved great success in dealing with graph-structured data. Nevertheless, it is well known that deep GCNs suffer from the over-smoothing problem, where node representations tend to be indistinguishable as more layers are stacked up. The theoretical research to date on deep GCNs has focused primarily on expressive power rather than trainability, an optimization perspective. Compared to expressivity, trainability attempts to address a more fundamental question: Given a sufficiently expressive space of models, can we successfully find a good solution via gradient descent-based optimizers? This work fills this gap by exploiting the Graph Neural Tangent Kernel (GNTK), which governs the optimization trajectory under gradient descent for wide GCNs. We formulate the asymptotic behaviors of GNTK in the large depth, which enables us to reveal the dropping trainability of wide and deep GCNs at an exponential rate in the optimization process. Additionally, we extend our theoretical framework to analyze residual connection-based techniques, which are found to be merely able to mitigate the exponential decay of trainability mildly. Inspired by our theoretical insights on trainability, we propose Critical DropEdge, a connectivity-aware and graph-adaptive sampling method, to alleviate the exponential decay problem more fundamentally. Experimental evaluation consistently confirms using our proposed method can achieve better results compared to relevant counterparts with both infinite-width and finite-width. 
\end{abstract}

\vspace{-1.0em}
\section{Introduction}

Recently, Graph Neural Networks (GNNs) have shown incredible abilities to learn node or graph representations and achieved superior performance on various downstream tasks, such as node classification~\citep{kipf2017semi, velivckovic2017graph, hamilton2017inductive}, graph classification~\citep{xu2018powerful,lee2019self,Yuan2020StructPoolSG}, and link prediction~\citep{kipf2016variational}, etc. However, most GNNs (e.g., GCNs) achieve their best only with a shallow depth, e.g., 2 or 3 layers, and their performance on those tasks would promptly degrade as the number of layers grows. Towards this phenomenon, research attempts have been made to deepen understanding of current GNN architectures and their expressive power. \cite{li2018deeper} showed that GCN is a special form of Laplacian smoothing, which mixes node representations with nearby neighbors. This mechanism potentially poses the risk of \textit{over-smoothing} as more layers are stacked together, where node representations tend to be indistinguishable from each other. \cite{Oono2020Graph} investigated the expressive power of GNNs using the asymptotic behaviors as the layer goes to infinity. They proved that under certain conditions, the expressive power of GCN is determined by the topological information of the underlying graph inherent in the graph spectra. 


Nevertheless, it remains elusive how to theoretically understand why deep GCNs fail to optimize. Existing theoretical investigation \citep{Oono2020Graph,xu2018powerful} on GNNs focus mainly on expressivity, which measures the complexity of functions that can be represented by a neural network. 
Exploring expressivity is theoretically convenient, but the corresponding conditions may be violated during the gradient training process, thereby leading to inconsistencies between theoretical conclusions and empirical results of trained networks \citep{guhring2020expressivity}. Compared to expressivity, trainability addresses a more difficult but fundamental perspective of neural networks: How effectively a neural network can be
optimized via gradient descent. The advantage of investigating trainability is that we can directly determine whether GNNs can be successfully trained under certain conditions, and to what extent. We are therefore inspired to raise two important questions:
\vspace{-0.5em}
\begin{itemize} [leftmargin=*]
    \item \emph{Can we theoretically characterize the trainability of graph neural networks with respect to depth, thus understanding why deep GCNs fail to generalize?}
    \item \emph{Can we further design an algorithm to facilitate deeper GCNs, benefiting from our theoretical investigation?}
\end{itemize}
\vspace{-0.5em}

Our answers are yes to both questions. We resort to the infinitely-wide multi-layer GCN to derive our solution. The research on infinitely-wide networks can be traced back to the seminal work of~\cite{neal1996priors}, which showed that single hidden layer networks with random weights at initialization (without training) are Gaussian Processes (GPs) in the infinite width limit. Later, the connection between GPs and multi-layer infinitely-wide networks with Gaussian initialization~\citep{lee2018deep,matthews2018gaussian} and orthogonal weights \citep{huang2020neural} was reported. Recent trends in Neural Tangent Kernel (NTK) have led to a proliferation of studies on the optimization and generalization of infinitely (ultra)-wide networks. In particular, \cite{jacot2018neural} made a groundbreaking discovery that gradient descent training in the infinite width limit can be captured by an NTK.~\cite{du2019graph} formulated Graph Neural Tangent Kernel (GNTK) for infinitely-wide GNNs and shed light on theoretical guarantees for GNNs. Prior to the discovery of GNTK, there was little understanding of the non-convexity of GNNs, which is analytically intractable. In the learning regime of GCN governed by GNTK, the optimization becomes an almost convex problem, making GNTK a promising perspective to study the trainability of deep GCNs. 

In this work, we leverage the GNTK techniques of infinitely-wide networks to investigate whether ultra-wide GCNs are trainable in the large depth. In particular, we formulate the large-depth asymptotic behavior of the GNTK, illuminated by innovative works on deep networks~\citep{hayou2019mean,xiao2019disentangling}, through which we can analyze the optimization properties of deep GCNs. 
Specifically, we make the following contributions: 
\begin{itemize} [leftmargin=*]
\item To our best knowledge, we are the first to investigate the trainability of deep GCNs through GNTK. We prove that all entries of a GNTK matrix regarding a pair of graphs converge exponentially to the same value, making the GNTK matrix singular in the large depth. We thus establish a corollary that the trainability of ultra-wide GCNs exponentially collapses on node classification tasks. 

\item We apply our theoretical analysis to the residual connection-based techniques for GCNs. Our theory shows that residual connection can, to some extent, slow down the exponential decay rate of trainability, but lack the ability to fundamentally solve the problem. This result enables to better understand why and to what extent recent residual connection-based methods work. 

\item  Our theoretical framework provides insights to guide the development of deep GCNs. We further propose an edge-based sampling method, named Critical DropEdge, to effectively mitigate the exponential decay of trainability. This graph-adaptive and connectivity-aware method is easy to implement in both finitely-wide and infinitely-wide GNNs. Our experiments show using the proposed method can outperform competitors in the large depth. 

\end{itemize}

\section{Background and Preliminaries}
\label{gen_inst}

\vspace{-0.2em}

We first review the results of infinitely-wide neural networks at initialization. We then review NTK, making a connection to trainability. Finally, we introduce GCNs along with our setup and notation. 

\vspace{-0.6em}
\subsection{Infinitely-wide Networks at Initialization}
We begin by considering a fully-connected network of depth $L$ with width $m_l$ in each layer. The weight and bias in the $l$-th layer are denoted by $W^{(l)} \in \mathbb{R}^{m_l \times m_{l-1}}$ and $b^{(l)} \in \mathbb{R}^{m_l}$. Letting the pre-activations be given by $h^{(l)}_i$, information propagation in this network is governed by,
\begin{equation}\label{signal}
\begin{small}
h^{(l)}_i = \frac{\sigma_w}{\sqrt{m_l}}\sum_{j=1}^{m_l} \phi \big({W^{(l)}_{ij}} h^{(l-1)}_j \big) + \sigma_b b^{(l)}_i
\end{small}
\end{equation}
where $\phi: \mathbb{R} \rightarrow \mathbb{R}$ is the activation function, $\sigma_w$ and $\sigma_b$ define the variance scale of the weights and biases, respectively. Given the parameterization that weights and biases are randomly generated by i.i.d. normal distribution, i.e., $W^{(l)}_{ij}, b^{(l)}_i \sim \mathcal{N}(0,1)$, the pre-activations are Gaussian distributed in the infinite width limit as $m_1, m_2, \dots, m_{l-1} \rightarrow \infty$. This results from the central limit theorem (CLT). Consider a dataset $X \in \mathbb{R}^{n \times d}$ of size $n = |X|$, the covariance matrix of Gaussian process kernel (GPK) regarding infinitely-wide network is defined by $\Sigma^{(l)}(x,x') = \mathbb{E}[h_i^{(l)}(x)h_i^{(l)}(x')]$. According to the signal propagation (\ref{signal}), the covariance matrix or GPK with respect to layer can be described by a recursion relation, $
\Sigma^{(l)}(x,x') = \sigma^2_w \mathbb{E}_{h \sim \mathcal{N}(0,\Sigma^{(l-1)})}[\phi(h(x))\phi(h(x'))]+\sigma^2_b
$.

The mean-field theory is a paradigm that studies the limiting behavior of GPK, which is a measure of expressivity for networks~\citep{poole2016exponential,schoenholz2016deep}. In particular, expressivity describes to what extent two different inputs can be distinguished. The property of evolution for expressivity $\Sigma^{(l)}(x,x')$ is determined by how fast it converges to its fixed point $\Sigma^\ast(x,x') \equiv \lim_{l \rightarrow \infty } \Sigma^{(l)}(x,x')$. It is shown that in almost the entire parameter space spanned by hyper-parameters $\sigma_w$ and $\sigma_b$, the evolution exhibits a dramatic convergence rate formulated by an exponential function except for a critical line known as the {\it edge of chaos} \citep{poole2016exponential,schoenholz2016deep}. Consequently, an infinitely-wide network loses its expressivity exponentially in most cases while retaining the expressivity at the edge of chaos. Given this reason, we focus on the edge of chaos in this work. In particular, we set the value of hyper-parameters to satisfy, $\sigma^2_w \int \mathcal{D}_z[\phi'(\sqrt{q^\ast} z)]^2 =1$, where $q^\ast$ is the fixed point of diagonal entries in the covariance matrix, and $\int \mathcal{D}z = \frac{1}{\sqrt{2\pi}}\int dz e^{-\frac{1}{2}z^2}$ is the measure for a normal distribution. For the ReLU activation, edge of chaos requires $\sigma^2_w=2$ and $\sigma^2_b = 0$. 

\subsection{Neural Tangent Kernel and Trainability}

Most studies on infinitely-wide networks through mean-field theory~\citep{poole2016exponential,schoenholz2016deep} have focused solely on initialization without training. \cite{jacot2018neural} took a step further by considering  infinitely-wide networks trained with gradient descent. Let $\eta$ be the learning rate, and $\mathcal{L}$ be the loss function. The dynamics of gradient flow for parameters $\theta= {\rm vec} (\{W^{(l)}_{ij},b^{(l)}_i \}) \in \mathbb{R}^{ (\sum_l m_l(m_{l-1}+1)) \times 1}$, the vector of all parameters, is given by,
\vspace{-0.2em}
\begin{equation}
\frac{\partial \theta}{\partial t} = - \eta \nabla_\theta \mathcal{L} = -\eta \nabla_\theta f_t(X)^T \nabla_{f_t(X)} \mathcal{L}
\end{equation}
Then, the dynamics of output functions $f(X) = {\rm vec}(f(x)_{x \in X}) \in \mathbb{R}^{nm_L \times 1}$ follow,
\begin{equation}
\frac{\partial f_t(X)}{\partial t} =  \nabla_\theta f_t(X) \frac{\partial \theta}{\partial t} = -\eta \Theta_t(X,X) \nabla_{f_t(X)} \mathcal{L}
\end{equation}
where the NTK at time $t$ is defined as,
\begin{equation} \label{eq:ntk}
\Theta_t(X,X)  \equiv \nabla_\theta f_t(X) \nabla_\theta f_t(X)^T \in \mathbb{R}^{n m_L \times n m_L}
\end{equation}
In a general case, the NTK varies with the training time, thus providing no substantial insights into the convergence property of neural networks. Interestingly, as shown by \cite{jacot2018neural}, the NTK converges to an explicit limiting kernel and does not change during training in the infinite-width limit. This leads to a simple but profound result in the case of mean squared error (MSE) loss, $\mathcal{L} = \frac{1}{2} \norm{f_t(X)-Y}_2^2$, where $Y$ is the label associated with the input $X$,
\begin{equation}
f_t(X) = (I-e^{-\eta \Theta_\infty(X,X)t})Y +  e^{-\eta \Theta_\infty(X,X)t}f_0(X)  
\end{equation}
where $\Theta_\infty$ is the limiting kernel. This is the solution to an ordinary differential equation. As the training time $t$ tends to infinity, the output function fits the label very well, i.e., $f_\infty(X) = Y$. As proved by Lemma 1 in \cite{hayou2019mean}, the network is trainable only if $\Theta_\infty(X,X)$ is non-singular. Quantitatively, the condition number $\kappa \equiv \lambda_{\rm max}/\lambda_{\rm min}$ can be a measure of trainability as confirmed by~\cite{xiao2019disentangling}.

\vspace{-0.2em}

\subsection{Graph Convolutional Networks}

We define an undirected graph as $G=(\mathcal{V},\mathcal{E})$, where $\mathcal{V}$ is a set of nodes and $\mathcal{E}$ is a set of edges. We denote the number of nodes in graph $G$ by $n = |\mathcal{V}|$. The nodes are associated with a node feature matrix $X \in \mathbb{R}^{n \times d}$, and the corresponding labels are $Y \in \mathbb{R}^{n \times k}$, with $d$ and $k$ being the dimension of node features and number of classes, respectively. In this work, we develop our theory towards understanding the trainability of GCNs on node classification tasks. 

GCNs iteratively update node features through aggregating and transforming the representations of their neighbors. Figure \ref{fig:overview} in Appendix \ref{sec:overview} illustrates an overview of the information propagation in a general GCN. We define a propagation unit to be the combination of a $R$-layer multi-layer perceptron (MLP)  and one aggregation operation. We use subscript $(r)$ to denote the layer index of MLP in each propagation unit and superscript $(l)$ to indicate the index of aggregation operation, which is also the index of the propagation unit. $L$ is the total number of propagation units.
To be specific, the node representation propagation in GCNs through an MLP follows the expression,
\vspace{-0.2em}
\begin{equation} \label{eq:agg}
\begin{small}
h^{(l)}_{(0)}(u) =  \frac{1}{|\mathcal{N}(u)|+1} \sum_{v \in \mathcal{N}(u) \cup u} h^{(l-1)}_{(R)}(v)
\end{small}
\end{equation}
\vspace{-0.2em}
\begin{equation} \label{eq:non}
\begin{small}
h^{(l)}_{(r)}(u) =  \frac{\sigma_w}{\sqrt{m}}   \phi \big(W^{(l)}_{(r)} h_{(r-1)}^{(l)} (u) \big)+  \sigma_b b^{(l)}_{(r)}
\end{small}
\end{equation}
where $h^{(0)}_{(0)} = X$, $W^{(l)}_{(r)} \in \mathbb{R}^{m_l \times m_{l-1}}$, and $b^{(l)}_{(r)} \in \mathbb{R}^{m_l}$ are the learnable weights and biases, respectively, $\phi$ is the activation function, $\mathcal{N}(u)$ is the neighborhood of node $u$, and $\mathcal{N}(u) \cup u$ is the union of node $u$ and its neighbors. Equation (\ref{eq:agg}) reveals the node feature aggregation operation among its neighborhood according to a GCN variant \citep{hamilton2017inductive}. 
Equation (\ref{eq:non}) is a standard non-linear transformation with NTK-parameterization \citep{jacot2018neural}, where $m$ is the width, i.e., number of neurons in each layer, $\sigma_w$ and $\sigma_b$ define the variance scale of the weights and biases. For the activation function, we focus on both ReLU and Tanh, which are denoted as $\phi(x) = \max\{0,x\}$ and $\phi(x) = \tanh(x)$, respectively. Without loss of generality, our theoretical framework can handle other common activation functions, whereas the GNTK work \citep{du2019graph} only adopted ReLU.


\vspace{-0.25em}
\section{Aggregation Provably Leads to Exponential Trainability Loss }
\label{sec:theory}

\vspace{-0.15em}
\subsection{GNTK Formulation}\label{sec:gntk}

Based on the definition of NTK (\ref{eq:ntk}), we recursively formulate the propagation of GNTK in the infinite-width limit. As information propagation in a GCN is built on two operations: aggregation (\ref{eq:agg}) and non-linear transformation (\ref{eq:non}), the corresponding formulas of GNTK are expressed as follows,
\vspace{-0.2em}
\begin{equation} \label{eq:ntk1}
 \Theta^{(l)}_{(0)} (u,u')  = \frac{1}{|\mathcal{N}(u)|+1} \frac{1}{|\mathcal{N}(u')|+1} \sum_{v \in \mathcal{N}(u) \cup u} \sum_{v' \in \mathcal{N}(u') \cup u'} \Theta^{(l-1)}_{(R)} (v,v')
 \end{equation}
 \begin{equation} \label{eq:ntk2}
 \begin{aligned}
 \Theta^{(l)}_{(r)} (u,u') =  \Theta^{(l)}_{(r-1)} (u,u') \dot \Sigma^{(l)}_{(r)} (u,u') + \Sigma^{(l)}_{(r)} (u,u') 
  \end{aligned}
\end{equation}



The two equations above correspond to the aggregation operation and MLP transformation, respectively. To compute the GNTK with respect to the depth, the key step is to obtain the covariance matrix $\Sigma^{(l)}_{(r)} (u,u') \equiv \mathbb{E}[ h^{(l)}_{(r)}(u)  h^{(l)}_{(r)}(u')]$. According to the CLT, node representation $h^{(l)}_{(r)}(u)$ is a Gaussian distribution in the infinite-width limit. Applying this result to equations (\ref{eq:agg}) and (\ref{eq:non}), the resultant covariance matrix is composed of two parts, 
\vspace{-0.2em}
\begin{equation} \label{eq:prop1}
 \Sigma^{(l)}_{(0)} (u,u')  = \frac{1}{|\mathcal{N}(u)|+1}\frac{1}{|\mathcal{N}(u')|+1} \sum_{v \in \mathcal{N}(u) \cup u} \sum_{v' \in \mathcal{N}(u') \cup u'} {\Sigma^{(l-1)}_{(R)} (v,v')}
 \end{equation}
 \vspace{-0.5em}
 \begin{equation} \label{eq:prop2}
 \begin{aligned}
 \Sigma^{(l)}_{(r)} (u,u') = \sigma^2_w \mathbb{E}_{z_1, z_2 \sim \mathcal{N} \big(0, \tilde \Sigma^{(l)}_{(r-1)} \big) }  \big[\phi(z_1) \phi(z_2) \big] + \sigma^2_b\\
 \dot \Sigma^{(l)}_{(r)} (u,u') = \sigma^2_w \mathbb{E}_{z_1, z_2 \sim \mathcal{N} \big(0, \tilde \Sigma^{(l)}_{(r-1)} \big) }  \big[\dot \phi(z_1) \dot \phi(z_2) \big] + \sigma^2_b
  \end{aligned}
\end{equation}


The first equation (\ref{eq:prop1}) results from the aggregation operation. Meanwhile, the second equation (\ref{eq:prop2}) corresponds to the $R$-times non-linear transformations, where $\mathbb{E}_{z_1,z_2}$ takes the expectation with respect to a centered Gaussian process of covariance $ \tilde \Sigma^{(l)}_{(r-1)} \in \mathbb{R}^{2 \times 2}$ for previous MLP layer across $u,u'$, and $\dot \phi$ denotes the derivative of $\phi$.

\subsection{Trainability in the Large Depth}

We aim to characterize the behavior of GNTK matrix $\Theta^{(l)}_{(r)}(G) \in \mathbb{R}^{n \times n}$, as the depth tends to infinity. From the GNTK formulation, both aggregation (\ref{eq:ntk1}) and transformation (\ref{eq:ntk2}) contribute simultaneously to the final limiting result. We derive our theorem on the asymptotic behavior of infinitely-wide GCN in the large depth limit, which is given as follows:
\begin{thm} [Convergence rate of GNTK] \label{thm1}
If transition matrix $\mathcal{A}(G) \in \mathbb{R}^{n^2 \times n^2}$ is irreducible and aperiodic, with a stationary distribution vector $\vec{\pi}(G) \in \mathbb{R}^{n^2 \times 1}$, where $ \vec{\Theta}^{(l)}_{(0)} (G)  = \mathcal{A}(G) \vec {\Theta}^{(l-1)}_{(R)} (G)$ and $\vec{\Theta}^{(l)}_{(r)} (G) \in \mathbb{R}^{n^2 \times 1}$ is the result of being vectorized. Then, there exist constants $0 < \alpha < 1$ and $C >0$, and constant vectors $\vec{v},\vec{v}' \in \mathbb{R}^{n^2 \times 1}$ depending on the number of MLP iterations $R$, such that
$
\big|\Theta^{(l)}_{(r)}(u,u') - \vec{\pi}(G)^T  \big( R l \vec{v}  + \vec{v}'\big)\big| \le C \alpha^l
$.
\end{thm}

The proof sketch follows a divide-and-conquer manner. In particular, we first analyze the network with only aggregation and prove that $\mathcal{A}(G)$ is a Markov transition matrix. Then, we formulate the behavior of MLP in the large depth based on \cite{hayou2019mean}. Finally, we derive the final result by considering the two operations. We leave the complete proof in Appendix \ref{sec:thm1}. 

In Theorem \ref{thm1}, we rigorously characterize the convergence properties of GNTK in the large depth limit. As the depth goes to infinity, all entries in the GNTK converge to a unique quantity at an exponential rate. We thus have $\Theta^{(l)}_{(r)}(G) \approx  \vec{\pi}(G)^T ( R l \vec{v})  {\bf 1}_{n \times n}$ as $l \rightarrow \infty$, where ${\bf 1}_{n \times n}$ is an ($n \times n$)-dimensional matrix whose entries are one. The exponential convergence rate of $\Theta^{(l)}_{(r)}(G)$ implies that the trainability of infinitely-wide GCNs degenerates dramatically, as stated below.  
\begin{cor} [Trainability of ultra-wide GCNs] \label{cor:nodec} 
Consider a GCN of the form (\ref{eq:agg}) and (\ref{eq:non}), with depth $l$, number of non-linear transformations $r$, an MSE loss, and a Lipchitz activation, trained with gradient descent on a node classification task. Then, the output function follows, 
$
f_t(X) = e^{-\eta  \Theta^{(l)}_{(r)}(G) t} f_0(X) + (I - e^{-\eta  \Theta^{(l)}_{(r)}(G) t }) Y 
$. Then, $\Theta^{(l)}_{(r)}(G)$ is singular when $l \rightarrow \infty$. Moreover, there exists a constant $C>0$ such that for all $t>0$,
$
\norm {f_t(X)-Y}> C
$.
\end{cor}
We leave proof in Appendix \ref{sec:cor1}. According to the above corollary, as $l\rightarrow \infty$, the GNTK matrix would become a singular matrix. This would lead to a discrepancy between output $f_t(X)$ and label $Y$, which means GNTK loses the ability to fit the label. Therefore, an ultra-wide GCN with a large depth cannot be trained successfully on node classification tasks.

\vspace{-0.2em}

\section{Towards Deepening Graph Neural Networks} \label{sec:cdrop}

\vspace{-0.2em}

\subsection{Theoretical Analysis on Residual Connection}

We have so far characterized the trainability of vanilla GCNs through the GNTK and showed that the trainability of ultra-wide GCNs drops at an exponential rate. 
Recently, considerable efforts have been made to deepen GCNs, among which residual connection-based techniques are widely applied to resolve the over-smoothing problem~\citep{li2019deepgcns}. We now apply our theoretical framework to analyze to what extent residual connection techniques could alleviate the trainability loss problem. 

We first consider residual connection in aggregation, in which the propagation of the GNTK can be formulated as,
\vspace{-0.3em}
\begin{equation} \label{eq:ntk3}
 \begin{aligned}
\vec \Theta^{(l)}(G)  = (1- \delta) \mathcal{A}(G) \vec \Theta^{(l-1)}(G)  + \delta \vec \Theta^{(l-1)}(G),
\end{aligned}
\end{equation}
where $ 0 < \delta < 1$. Taking equation (\ref{eq:ntk3}) as a new aggregation process, then
$\vec \Theta^{(l)}(G) = \mathcal{\tilde A}(G) \vec \Theta^{(l-1)}(G) $, where $ \mathcal{\tilde A}(G) = (1- \delta)\mathcal{A}(G) + \delta I$. We prove that $\mathcal{\tilde A}(G)$ is also a transition matrix with a greater second largest eigenvalue compared to the original matrix $\mathcal{A}(G)$.
\begin{thm} [Convergence rate of residual connection in aggregation] \label{thm2}
Consider a GNTK of non-linear transformation (\ref{eq:ntk2}) and residual connection (\ref{eq:ntk3}). Then with a stationary vector $\tilde{\vec{\pi}}(G)$ for $\mathcal{\tilde A}(G)$, there exist constants $0 < \tilde{\alpha} < 1$ and $ C >0$, and constant vectors $\vec{v}$ and $\vec{v}'$ depending on $R$, such that
$
\big|\Theta^{(l)}_{(r)}(u,u') - \tilde{\vec{\pi}}(G)^T  \big( R l \vec{v}  + \vec{v}'\big)\big| \le C \tilde \alpha^l
$.
Furthermore, we denote the second largest eigenvalue of $\mathcal{A}(G)$ and $\mathcal{\tilde A}(G)$ as $\lambda_2$ and $\tilde \lambda_2$, respectively. Then,
$\tilde \lambda_2  > \lambda_2$.
\end{thm}

The detailed  proof of Theorem \ref{thm2} and the relationship between convergence rate and the second largest eigvenvalue can be found in Appendix \ref{sec:agg}. This theorem implies that a residual connection in aggregation can slow down the convergence rate, which is consistent with empirical observations that residual connection can help deepen GCNs.

Then, we consider residual connection only applied on non-linear transformation (MLP). In this case, the recursive equation for the corresponding GNTK can be expressed as,
\vspace{-0.2em}
 \begin{equation} \label{eq:ntk4}
 \begin{aligned}
 \Theta^{(l)}_{(r)} (u,u') = & \Theta^{(l)}_{(r-1)} (u,u') \big(\dot \Sigma^{(l)}_{(r)} (u,u') + 1 \big)  + \Sigma^{(l)}_{(r)} (u,u')  
   \end{aligned}
\end{equation}
This formula is similar to the vanilla GNTK in the infinite-width limit. Only an additional residual term appears according to residual connection. It turns out that this term may not help slow down the convergence rate for non-linear transformation. 
\begin{thm} [Convergence rate of GNTK with residual connection between transformations] \label{thm3}
Consider a GNTK of the form (\ref{eq:ntk1}) and (\ref{eq:ntk4}).
If $\mathcal{A}(G)$ is irreducible and aperiodic, with a stationary distribution $\vec{\pi}(G)$, then there exist constants $0 < \alpha < 1$ and $C >0$, and constant vectors $\vec{v}$ and $\vec{v}'$ depending on $R$, such that, 
$
\big|\Theta^{(l)}_{(r)}(u,u') - \vec{\pi}(G)^T  \big( Rl (1+\frac{\sigma^2_w}{2})^{Rl} \vec{v}  + \vec{v}'\big) \big| \le C \alpha^l
$.
\end{thm}

Note that $\alpha$ in  Theorem~\ref{thm3} is the same as in Theorem~\ref{thm1}. The proof of Theorem~\ref{thm3} can be found in Appendix \ref{sec:mlp}. Theorem \ref{thm3} demonstrates that adding residual connection in MLP can not even reduce the convergence rate of trainability. Finally, we consider residual connection applied to both aggregation and non-linear transformation simultaneously:
\begin{cor} [Convergence rate of GNTK with residual connection in aggregation and transformation] \label{cor2} 
Consider a GNTK of the form (\ref{eq:ntk3}) and (\ref{eq:ntk4}). If $\mathcal{\tilde A}(G)$ is irreducible and aperiodic, with a stationary distribution $\tilde{ \vec{\pi}}(G)$, there exist constants $0 < \tilde{\alpha} < 1$ and $ C >0$, and constant vectors $ \vec{v}, \vec{v}' \in \mathbb{R}^{n^2 \times 1}$ depending on $R$, such that
$
\big|\Theta^{(l)}_{(r)}(u,u') - \tilde {\vec{\pi}}(G)^T \big( Rl (1+\frac{\sigma^2_w}{2})^{Rl} \vec{v} + \vec{v}' \big)\big| \le C \tilde \alpha^l
$.
\end{cor}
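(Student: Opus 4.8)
The plan is to combine the two previously established proof techniques—Theorem \ref{thm2} for the residual-in-aggregation operator and Theorem \ref{thm3} for the residual-in-MLP recursion—into a single layered argument, since Corollary \ref{cor2} is precisely the conjunction of those two modifications. Concretely, I would treat the aggregation step via the modified transition matrix $\mathcal{\tilde A}(G) = (1-\delta)\mathcal{A}(G) + \delta I$, which by Theorem \ref{thm2} is still irreducible, aperiodic, and Markov, with stationary vector $\tilde{\vec{\pi}}(G)$ and second-largest eigenvalue $\tilde\lambda_2 \in (0,1)$; then handle the $R$ iterations of the residual-MLP recursion \eqref{eq:ntk4} using the same one-step estimate that drives Theorem \ref{thm3}.

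First I would isolate the MLP block. Fixing a propagation unit index $l$, I would unroll \eqref{eq:ntk4} over $r = 1, \dots, R$ to write $\Theta^{(l)}_{(R)}(u,u')$ as a product of the factors $\big(\dot\Sigma^{(l)}_{(r)}(u,u') + 1\big)$ applied to $\Theta^{(l)}_{(0)}(u,u')$ plus an inhomogeneous sum of the $\Sigma^{(l)}_{(r)}(u,u')$ terms carried forward by those same products. On the edge of chaos the diagonal entries of $\tilde\Sigma^{(l)}_{(r-1)}$ sit at the fixed point $q^\ast$, so $\dot\Sigma$ converges (exponentially in $r$, hence in the accumulated depth) to the value forcing $\sigma_w^2 \int \mathcal{D}z\,[\phi'(\sqrt{q^\ast}z)]^2 = 1$; adding the residual $+1$ turns the multiplicative factor into $\approx 1 + \tfrac{\sigma_w^2}{2}$ for ReLU (this is exactly the computation behind the $(1+\tfrac{\sigma_w^2}{2})^{Rl}$ factor in Theorem \ref{thm3}, which I would cite rather than redo). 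The net effect is that one pass through a residual-MLP propagation unit multiplies the leading behavior by a constant $\beta \equiv 1+\tfrac{\sigma_w^2}{2}$ and adds a bounded increment, while all deviations from the ``all entries equal'' limit are contracted by a factor bounded away from $1$.

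Next I would feed this MLP summary into the aggregation recursion $\vec\Theta^{(l)}(G) = \mathcal{\tilde A}(G)\,\vec\Theta^{(l)}_{\text{post-MLP}}(G)$, exactly as in the proof of Theorem \ref{thm1}/\ref{thm2} but with $\mathcal{\tilde A}$ in place of $\mathcal{A}$ and with the MLP map contributing the multiplicative factor $\beta^R$ and an additive drift per unit. Decomposing $\mathbb{R}^{n^2}$ into the span of the constant eigenvector $\mathbf{1}$ (eigenvalue $1$ of $\mathcal{\tilde A}$) and its $\mathcal{\tilde A}$-invariant complement, the component of $\vec\Theta^{(l)}_{(r)}$ orthogonal to $\tilde{\vec{\pi}}(G)^T$-average decays like $(\tilde\lambda_2)^l$ up to the slowly growing prefactor $Rl\,\beta^{Rl}$, and the projection onto $\tilde{\vec{\pi}}(G)$ accumulates the arithmetic-plus-geometric drift, giving the stated center $\tilde{\vec{\pi}}(G)^T\big(Rl(1+\tfrac{\sigma_w^2}{2})^{Rl}\vec{v} + \vec{v}'\big)$. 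Choosing $\tilde\alpha$ slightly larger than $\tilde\lambda_2$ absorbs the polynomial-times-$\beta^{Rl}$ prefactor into the exponential, yielding the bound $C\tilde\alpha^l$; the constant vectors $\vec{v}, \vec{v}'$ are read off from the fixed-point data of $\Sigma, \dot\Sigma$ and the eigen-decomposition of $\mathcal{\tilde A}$, and they depend on $R$ through the unrolled MLP product.

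The main obstacle I anticipate is the bookkeeping of the error propagation through the interleaving of the two operations: each MLP unit not only contracts the ``off-constant'' part but also multiplies the still-large leading term by $\beta^R$, so I must verify that the contraction from $\mathcal{\tilde A}$'s spectral gap wins uniformly, i.e. that $\tilde\lambda_2 \cdot \beta^R < 1$ after suitable normalization—or, more carefully, that the error measured relative to the growing prefactor $\beta^{Rl}$ still decays geometrically. This requires being precise about whether the relevant statement is about the raw entries or about entries rescaled by $\beta^{Rl}$, and I would handle it by tracking the rescaled quantity $\Theta^{(l)}_{(r)}/\beta^{Rl}$ throughout, for which the recursion becomes a genuine contraction toward its $\tilde{\vec{\pi}}$-average plus a convergent arithmetic drift, exactly mirroring the structure already validated in Theorems \ref{thm2} and \ref{thm3}. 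Everything else—irreducibility/aperiodicity of $\mathcal{\tilde A}$, existence of $\tilde{\vec{\pi}}$, the edge-of-chaos fixed-point facts—is inherited verbatim from the earlier results.
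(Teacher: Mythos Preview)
Your plan is correct and matches the paper's approach: the paper states Corollary~\ref{cor2} without any proof, treating it as an immediate combination of Theorem~\ref{thm2} (replace $\mathcal{A}$ by $\mathcal{\tilde A}$ in the aggregation step) and Theorem~\ref{thm3} (carry the $(1+\sigma_w^2/2)^{Rl}$ growth through the residual-MLP recursion), exactly as you outline. Your explicit identification of the bookkeeping issue with the $\beta^{Rl}$ prefactor and the proposal to track the rescaled quantity $\Theta^{(l)}_{(r)}/\beta^{Rl}$ is in fact more careful than anything the paper writes down; the paper's proof of Theorem~\ref{thm3} already glosses over this same point with ``Following the proof diagram in Lemma~\ref{thm0} and Theorem~\ref{thm1}\ldots'', so your level of detail exceeds what is required to match the source.
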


\vspace{-0.2em}

\subsection{A New Sampling Method: Critical DropEdge}

Residual connection is designed from a layer-wise perspective, but it has limited abilities to mitigate the exponential decay of trainability. To better resolve this problem, we need to look deeper into the root cause of the problem -- the transition matrix corresponding to the aggregation operation. A necessary condition for matrix $\mathcal{A}(G)$ to be a probability transition matrix is that graph $G$ is connected. 
Thus, breaking the connectivity condition is a promising way of better solving the exponential decay problem. One such method is to perform edge sampling guided by the critical percolation theory~\citep{huang2018critical,erdHos1961strength} in random graphs.

On a finite complete graph of $n$ nodes, there exist $E_t = n(n-1)/2$ edges between all pairs of nodes. A random graph $\hat G$ is achieved by randomly and uniformly preserving some edges from the complete graph with an edge probability as $p = \frac{|E|}{E_t}$, where $|E|$ is the number of edges preserved in the random graph. In this way, the critical percolation can be realized in the random graph with a critical edge probability $p_c = 1/(n-1 )$ \citep{erdHos1961strength}. In the thermodynamic
limit of $n \rightarrow \infty$, the critical random graph exhibits critical connectivity: the probability that there exists a path from a fixed point to another point within a certain distance decreases polynomially.

\begin{prop} 
[Critical connectivity in random graph \citep{erdHos1961strength}] \label{prop:critical}
Suppose a random graph $\hat G$ has $n$ nodes with a constant edge probability $p$. (1) If $p < p_c$, then almost every random graph is such that its largest component\footnote{In graph theory, a component of an undirected graph is an induced subgraph in which any two nodes are connected to each other by paths.} is of size $O(\log n)$; (2) If $p >p_c$, the random graph has a giant component of size $(1-\alpha_p+o(1))n$, where $\alpha_p < 1$; (3) If $p =p_c$, then the maximal size of a component of almost every graph has order $n^{2/3}$.
\end{prop}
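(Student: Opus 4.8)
This is a classical phase-transition result, and the plan is to recover it by analysing the breadth-first exploration (``peeling'') process of $\hat G = G(n,p)$ and comparing it to a Galton--Watson branching process. Since $p_c = 1/(n-1) = (1+o(1))/n$, write $c := np$, so the three cases are $c<1$, $c>1$ and $c=1$ (up to $o(1)$), and $\alpha_p$ in (2) will be an explicit function of $c$. Fix a vertex $v$ and explore $\mathcal{C}(v)$ one vertex at a time; when the $t$-th vertex is processed the number of newly discovered neighbours is $\mathrm{Bin}(n-a_t,p)$, where $a_t$ is the number of vertices seen so far, and this is sandwiched between $\mathrm{Bin}(n(1-o(1)),p)$ and $\mathrm{Bin}(n,p)$ while $a_t = o(n)$; both endpoints converge to $\mathrm{Poisson}(c)$. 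In the subcritical case $c<1$ the exploration is dominated by a Galton--Watson process of offspring mean $c<1$, whose total progeny $T$ has an exponential tail $\mathbb{P}(T\ge k)\le e^{-I(c)k}$ with $I(c)>0$ (a Chernoff bound on the associated negative-drift walk); taking $k = A\log n$ with $A$ large and a union bound over the $n$ choices of $v$ gives $|\mathcal{C}_{\max}| = O(\log n)$ a.a.s., and a matching $\Omega(\log n)$ lower bound follows from a Poisson/second-moment count of isolated tree components of size $\asymp \log n$. This proves (1).

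For (2), $c>1$: the dominating branching process survives with probability $\rho(c) = 1 - \alpha_p$, where $\alpha_p\in(0,1)$ is the smallest root of $\alpha = e^{-c(1-\alpha)}$. The argument has two parts. \emph{Concentration:} for fixed $v$ the exploration walk either dies within $O(\log n)$ steps or reaches a linear level while its drift is still positive, so $\mathbb{P}(|\mathcal{C}(v)| \ge K\log n)\to\rho$ and $\mathbb{P}(K\log n \le |\mathcal{C}(v)| \le \varepsilon n)\to 0$; summing over $v$ and bounding the variance shows the number of vertices in components of size $\ge K\log n$ is $(\rho+o(1))n$ a.a.s. \emph{Uniqueness by sprinkling:} split $p = p_1+p_2$ with $p_2 = \varepsilon/n$; in $G(n,p_1)$ every ``large'' component already has $\ge \delta n$ vertices, and a.a.s. some $p_2$-edge joins each pair of them, so they all merge into one giant component of size $(1-\alpha_p+o(1))n$, all other components being $O(\log n)$.

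For (3), $c=1$: the exploration walk $S_t$ (active vertices after $t$ steps, $S_0=1$) behaves like $S_t \approx W_t - t^2/(2n)$ with $W_t$ a mean-zero walk of variance $\approx t$, and $|\mathcal{C}(v)|$ is the first zero of $S$. Balancing the fluctuation scale $\sqrt t$ against the exhaustion drift $t^2/n$ singles out $t \asymp n^{2/3}$: for $t \ll n^{2/3}$ the walk is essentially unbiased, whereas for $t \gg n^{2/3}$ the negative drift overwhelms the fluctuations and forces $S$ to $0$. One then shows, via Doob/Azuma martingale estimates on the event that $S$ stays positive, an upper bound $\mathbb{P}(|\mathcal{C}_{\max}| > Mn^{2/3}) \to 0$ as $M\to\infty$ (uniformly in $n$), and, via a first- and second-moment count of vertices $v$ with $|\mathcal{C}(v)| \ge a n^{2/3}$, a matching lower bound $\mathbb{P}(|\mathcal{C}_{\max}| < a n^{2/3}) \to 0$ as $a\to 0$. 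Hence $|\mathcal{C}_{\max}|$ is of order $n^{2/3}$.

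The main obstacle is part (3). The sub- and supercritical regimes collapse to clean exponential-tail estimates for branching-process progeny (plus a routine sprinkling argument for uniqueness in the supercritical case), but the critical case requires controlling the exploration walk on the precise $n^{2/3}$ window where the $\sqrt t$ fluctuations and the $t^2/n$ drift are comparable; making both the upper and lower bounds rigorous there needs either Aldous's Brownian-motion-with-parabolic-drift scaling limit or delicate martingale and moment estimates that track this balance, and that is where essentially all the technical effort lies.
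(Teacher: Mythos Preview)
Your proposal is a correct and standard modern outline for proving the Erd\H{o}s--R\'enyi phase transition, but there is nothing in the paper to compare it against: the proposition is quoted from the literature (with a citation to \cite{erdos2011evolution}) and used only to motivate the Critical DropEdge construction; the paper supplies no proof of its own. So the ``paper's approach'' here is simply to cite the classical result.

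That said, your sketch is genuinely the right one. The exploration/branching-process comparison handles (1) and (2) cleanly, and your identification of the $\sqrt{t}$ vs.\ $t^2/n$ balance in (3) is exactly what drives the $n^{2/3}$ scale. If you were to write this up, the only place that would need real care is, as you note, the critical window: the upper bound via martingale/Azuma on the exploration walk and the lower bound via second-moment on vertices in large components both require tracking constants through the drift-fluctuation crossover, and a fully rigorous version typically either invokes Aldous's Brownian excursion limit or the \L{}uczak--Pittel--Wierman style moment calculations.
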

\vspace{-0.3em}
The proposition above implies that the information transforms in the critical random graph at a polynomial rate rather than an exponential rate. This inspires us to solve the problem of exponentially dropping trainability through designing a graph-dependent and connectivity-aware sampling algorithm called Critical DropEdge.
In particular, given a graph $G$, we randomly drop some edges and preserve the number of edges as $ E_r = E_t \cdot p_c = n/2$, to approximate the critical graph. 
Unlike DropEdge~\citep{rong2019dropedge} that randomly removes a
certain number of edges from the input graph, our method fixes the edge preserving percentage as $\rho = E_r / |E| =  \frac{|V|}{2|E|}$. It is worth noting that DropEdge may choose the edge probability as $p<p_c$ where information can only be passed to a distance of $O(\log n)$ in the graph, or $p>p_c$ where the exponential decay of trainability may occur. 
A further discussion of the relationship between Proposition \ref{prop:critical} and the trainability of the corresponding GNTK can be found in Appendix \ref{sec:critical_dis}. 

\begin{figure}[t!]
 \centering
  \includegraphics[width=0.9\textwidth]{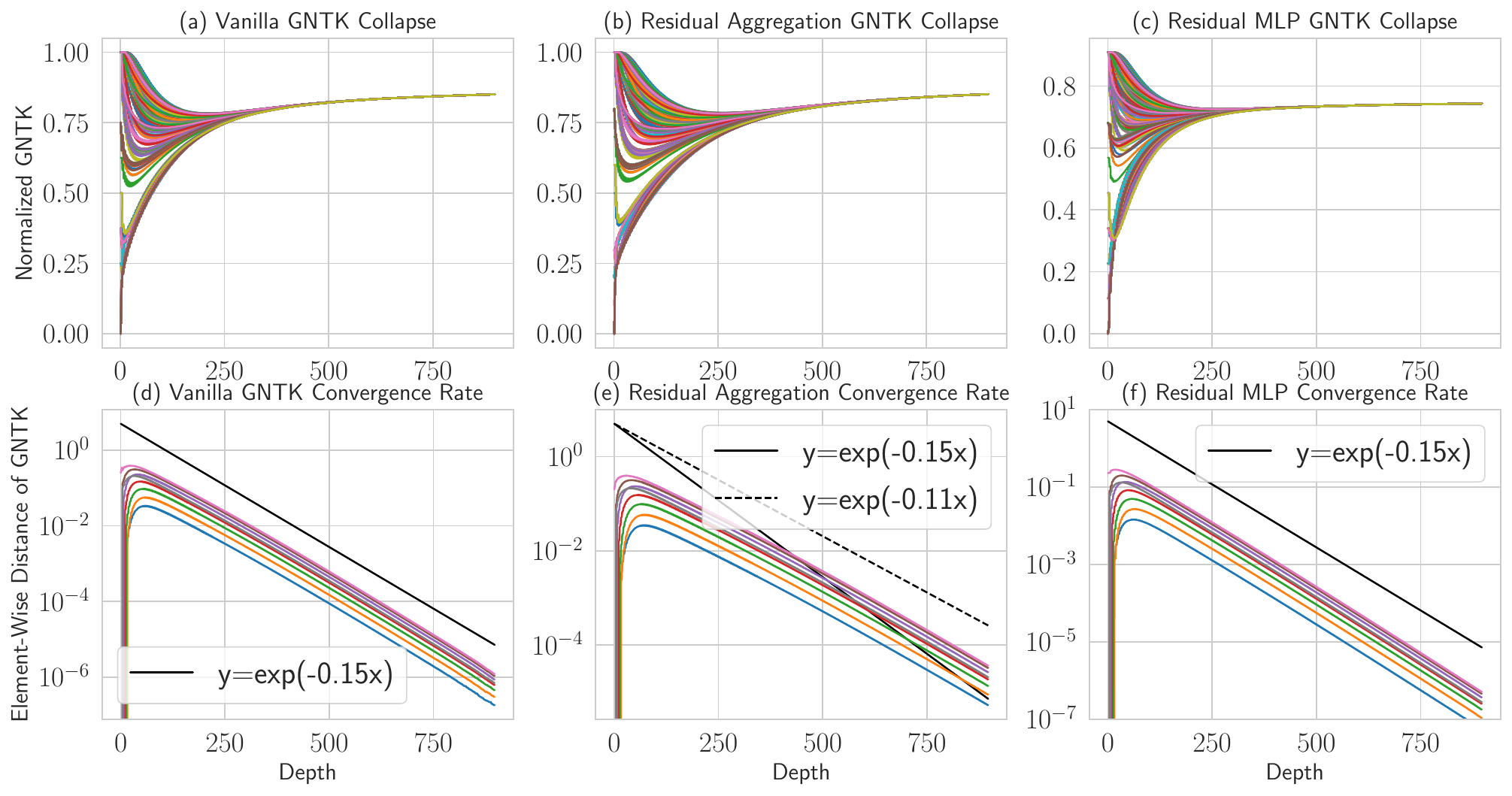} 
  \vspace{-0.5em}
 \caption{Convergence rate of GNKT. (a)-(c) Entries of the normalized (residual connection) GNTK as a function of the depth, defined as $Rl + r$. All entries tend to have the same value as the depth grows. (d)-(f) The element (entry)-wise distance of the normalized (residual connection) GNTK as a function of the depth. The convergence rate can be bounded by an exponential function $y=\exp(-0.15x)$ for vanilla and residual MLP GNTK, whereas the convergence rate of residual aggregation is bounded by $y=\exp(-0.11x)$.}
 \label{fig:convergence}
\vspace{-1em}
\end{figure}

\section{Experiments}
\label{sec:exp}

\vspace{-0.5em}

In this section, we empirically verify our theoretical results and validate the proposed Critical DropEdge method on node classification tasks. Details of four real-world graph datasets used for node classification are summarized in Table~\ref{tab:dataset} in Appendix \ref{sec:dataset}.

\vspace{-0.2em}

\subsection{Convergence Results of GNTKs}

Theorems~\ref{thm1}-\ref{thm3} provide theoretical convergence rates for (residual) GNTK. We show the corresponding numerical verification in Figure~\ref{fig:convergence}. We select a graph randomly from a bioinformatics dataset (i.e., MUTAG), which consists of 18 nodes and 21 edges.
We generate a GNTK of the graph using the implementation of \cite{du2019graph},
with ReLU activation, $R=3$, and $L=300$. Figure~\ref{fig:convergence}(a)-(c) show that all entries of normalized GNTKs converge to an identical value as the depth goes larger. Figure~\ref{fig:convergence}(d)-(f) further indicates that the convergence rate of GNTK is exponential, as reflected by our theorems. By comparing convergence rates, we conclude that the residual connection in aggregation can slow down the convergence rate, which is consistent with Theorem \ref{thm2}.

\subsection{Trainability of Wide GCNs}

We further examine whether ultra-wide GCNs can be trained successfully for node classification. We conduct experiments on a GCN~\citep{kipf2017semi}, where we apply a width of $1,000$ at each hidden layer and the depth ranging from $2$ to $29$. Figure~\ref{fig:gcn} shows the training and test accuracy on Cora, Citesser and Pubmed after 300 training epochs. These results show a dramatic drop in both training and test accuracy as the depth grows, confirming that wide GCNs lose trainability significantly in the large depth on node classification, as revealed by Corollary~\ref{cor:nodec}.

\begin{wrapfigure}{r}{65mm}
\vspace{-2.5em}
  \includegraphics[width=0.4\textwidth]{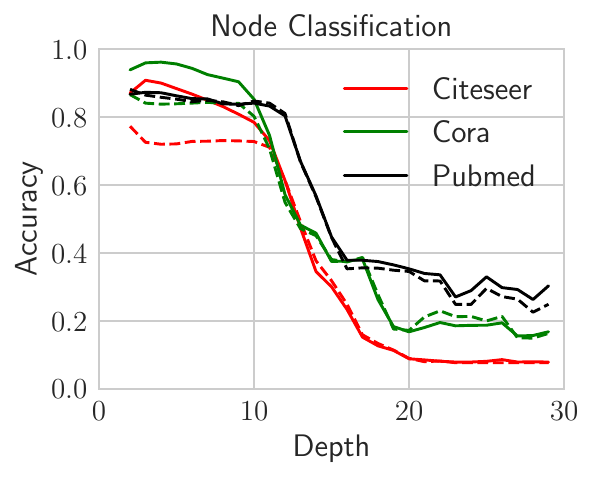} 
   \vspace{-1em}
 \caption{Training and test accuracy w.r.t. model depth. Solid and dashed lines are train and test accuracy respectively.}
 \label{fig:gcn}
 \vspace{-1em}
\end{wrapfigure}

\vspace{-0.3em}

\subsection{Performance of Critical DropEdge}


We apply Critical DropEdge (referred to as C-DropEdge) to finitely-wide and infinitely-wide GNNs on semi-supervised node classification. The implementation details are given in Appendix \ref{sec:exp_imp}.

For finitely-wide GNNs, we consider three backbones: GCN, JKNet, and IncepGCN \citep{rong2019dropedge}. For DropEdge, we use the hyper-parameters reported in \cite{rong2019dropedge} to obtain the results. For C-DropEdge, we perform a random hyper-parameter search and fix the edge preserving rate as $\rho(G)=\frac{|V|}{2|E|}$. In addition, we compare with DGN~\citep{zhou2020towards}, a normalization-based baseline, which uses GCN \citep{kipf2017semi} and GAT \citep{velivckovic2017graph} as backbones.  
Table \ref{tab:result_final} summarizes node classification performance of finitely-wide GNNs with 4/8/16/32 layers on three citation networks (Cora, Citesser, Pubmed) and one co-author network (Physics).  
In particular, we report the best performance across different backbones for DropEdge, C-DropEdge and DGN, and leave separate results with different backbones in Appendix \ref{sec:backbones}. The reported results are the mean and standard deviation over 10 times. As can be seen, C-DropEdge consistently outperforms GCN, DGN, and DropEdge, especially when the model is deep. Besides, C-DropEdge can achieve smaller error variances, demonstrating stronger robustness than DropEdge. 


For infinitely-wide GCNs, we consider two backbones: GCN \citep{kipf2017semi} and JKNet \citep{xu2018representation}. The corresponding results can be found in Appendix~\ref{sec:infinite-width}.

\vspace{-0.3em}

\paragraph{Comparison to DropEdge.} DropEdge and C-DropEdge differ largely in their hyper-parameter search space. To ensure good performance, DropEdge needs to exhaustively search for the most appropriate edge preserving percentage -- one of the most significant hyper-parameters -- that has a crucial influence on the final performance. In contrast, C-DropEdge has a fixed and graph-dependent edge preserving rate, which implies its hyper-parameter space is much smaller than that of DropEdge. To verify if C-DropEdge can achieve the results close to optimal, we conduct experiments with various edge preserving rates, and present the results in Table \ref{tab:rate}. We conclude that, from both theoretical and empirical perspectives, the edge preserving percentage set by C-DropEdge is reasonable and effective, achieving the results close to optimal.


\vspace{-0.6em}
\begin{table*}
\centering
\caption{Comparison results of test accuracy (\%) between C-DropEdge, GCN, DropEdge, and DGN.} 
\footnotesize
\setlength{\tabcolsep}{4.5pt}\vspace{0.2em}
\begin{tabular}
{lcccccc}
\toprule
Datasets & Methods & {4-layer} &{8-layer}  &{16-layer}  &{32-layer}  \\
\midrule
\multirow{4}*{Cora} 
 &GCN &$79.8 \pm 1.1 $ & $73.2 \pm 2.7$ & $36.3 \pm 13.8 $ & $ 20.1 \pm 2.4 $ \\
~&DropEdge & $82.2 \pm 0.7$ & $ 82.0 \pm 0.9$ &$ 82.2 \pm 0.7$  & $ 82.1 \pm 0.5$ & \\
~& DGN &  $82.0 \pm 0.9$ & $80.2 \pm 0.8$ &$77.7 \pm 1.0$ & $73.0 \pm 0.8$ \\
~&C-DropEdge & $ \textbf{82.5} \pm \textbf{0.7} $ & $ \textbf{82.3} \pm \textbf{0.6}$   & $\textbf{82.4} \pm \textbf{0.8}$  & $ \textbf{82.6} \pm \textbf{0.9}$ \\
\midrule
\multirow{4}*{Citeseer} 
 &GCN &$ 61.2 \pm 3.0 $ & $ 50.2 \pm 5.7$ & $30.8 \pm 2.2  $ & $  21.7 \pm 3.0 $  \\
~&DropEdge& $ 70.2 \pm 1.0$ & $ 70.8 \pm 1.1$ &$  70.7 \pm 1.0$  & $70.2 \pm 0.8$ \\
~& DGN & $69.0 \pm0.9 $ & $66.5 \pm 1.1 $ & $ 62.9 \pm 1.2 $  & $ 63.2 \pm 0.9$\\
~&C-DropEdge & $ \textbf{70.8} \pm \textbf{0.6} $ & $ \textbf{70.9} \pm \textbf{0.9}$ & $ \textbf{71.0} \pm \textbf{1.0} $  & $ \textbf{70.7} \pm \textbf{0.9}$\\
\bottomrule
\multirow{4}*{Pubmed} 
 &GCN &$ 77.4 \pm 0.7 $ & $ 57.2 \pm 8.4$ & $ 39.5 \pm 10.3 $ & $ 36.3  \pm 8.4 $  \\
~&Dropedge& $ 77.6 \pm 1.4$ & $ 77.3  \pm 1.3$ & $ 76.7 \pm 1.3$  & $ 77.2 \pm 1.3 $ \\
~& DGN & $ \textbf{78.2} \pm \textbf{1.0} $ & $ 77.8 \pm 1.2 $ &$ 77.2 \pm 1.3 $  & $ 77.0  \pm 1.1 $\\
~&C-DropEdge & $ 78.0 \pm 0.4 $ & $ \textbf{77.9} \pm \textbf{1.0} $ & $ \textbf{77.2} \pm \textbf{1.0} $  & $ \textbf{77.8} \pm \textbf{1.0}$\\
\bottomrule
\multirow{4}*{Physics} 
 &GCN &$ 90.2 \pm 0.9 $ & $ 83.5 \pm 2.2$ & $ 41.6 \pm 6.2 $ & $  28.8 \pm 9.4  $  \\
~&Dropedge& $ 91.6 \pm 0.8 $ & $ 91.5 \pm 0.7$ &$91.2 \pm 0.5$  & $ 91.3 \pm 0.8 $ \\
~& DGN & $ \textbf{92.2} \pm \textbf{1.0}$ & $ 86.4 \pm 0.7 $ &$ 83.4 \pm 0.6$  & $ 83.2 \pm  0.8 $\\
~&C-DropEdge & $ 91.9 \pm 0.7 $ & $ \textbf{91.7} \pm \textbf{0.6}$ &$ \textbf{92.0} \pm \textbf{0.4}$  & $ \textbf{91.6} \pm 
\textbf{0.6}$\\
\bottomrule
\end{tabular}
\label{tab:result_final}
\end{table*}

\definecolor{MyBlue}{rgb}{0.85,0.93,0.96}

\begin{table}[h]
    \centering
    \caption{Comparison results of test accuracy at various edge preserving rates. Critical preserving percentage for each dataset is marked in bold. The three best results per model are shaded in \colorbox{MyBlue}{blue}.} 
    \setlength{\tabcolsep}{4.7pt}
    \scalebox{0.75}{
    \begin{tabular}{cccccccccc}
         \toprule
          &Cora &  &   & Citeseer &  & &  Pubmed &  & \\
         \midrule
         Percentage & GCN-8 & JKNet-4 & Percentage  & GCN-4  &IncepGCN-4 & Percentage & JKNet-16 & IncepGCN-32\\
         \midrule
         0.05 & $58.2 \pm 19.6$  & $82.0 \pm 0.6$ & 0.15 & $68.8 \pm 1.2$ & $70.1 \pm 0.7 $ &0.01  & $76.1 \pm 1.8$ & $75.5 \pm 1.9$  \\
         0.10  & $69.6 \pm 14.4$ &  $82.1 \pm 0.6$ & 0.20 & $68.8 \pm 1.1$  & $70.6 \pm 0.9 $ &0.05  & $76.2 \pm 1.5$ & $76.1 \pm 1.3$ \\
         0.15 & $69.7 \pm 12.5$ &  \colorbox{MyBlue}{$82.2 \pm 0.6$} & 0.25 & $68.7 \pm 0.8$  & $70.5 \pm 0.9 $ &0.10  & $76.0 \pm 1.4$ & $76.8 \pm 1.3$  \\
         0.20 & $75.4 \pm 4.0$ &   \colorbox{MyBlue}{$82.5 \pm 0.7$} & 0.30 &  \colorbox{MyBlue}{$68.9 \pm 0.8$}  &  \colorbox{MyBlue}{$70.5 \pm 0.9$} &0.15  &  \colorbox{MyBlue}{$76.9 \pm 0.9$} & $77.0 \pm 1.4$ \\
         \midrule
         \textbf{0.25} &  \colorbox{MyBlue}{$77.3 \pm 2.5$} &  \colorbox{MyBlue}{$82.5 \pm 0.7$}  & \textbf{0.35} &   \colorbox{MyBlue}{$ 69.0 \pm 0.8$} &  \colorbox{MyBlue}{$70.8 \pm 0.6 $} &\textbf{0.22}  &  \colorbox{MyBlue}{$ 76.9 \pm 0.9$} &  \colorbox{MyBlue}{$ 77.8 \pm 0.9$} \\
         \midrule
         0.30 & \colorbox{MyBlue}{$77.2 \pm 2.7$} & $82.2 \pm 1.1$  & 0.40 &   \colorbox{MyBlue}{$68.9 \pm 0.9$} &  \colorbox{MyBlue}{$70.5 \pm 0.4 $} &0.25 &   \colorbox{MyBlue}{$76.9 \pm 1.1$} & $75.8 \pm 2.5$ \\
         0.35 &  \colorbox{MyBlue}{$77.2 \pm 2.8$} & $81.7 \pm 0.8$   & 0.45 & $68.4 \pm 1.7$ &$70.1 \pm 0.7 $ &0.30  & $ 76.8 \pm 1.1$ & $77.1 \pm 1.2$ \\
         0.40 & $74.9 \pm 5.7$ & $81.4 \pm 0.6$  & 0.50 & $68.1 \pm 0.9$ &$70.2 \pm 0.8 $  &0.35  & $76.4 \pm 1.2$ &  \colorbox{MyBlue}{$77.6 \pm 1.3$}  \\
         0.45 &$75.5 \pm 6.4$ & $81.7 \pm 0.7$  & 0.55 &$68.0 \pm 1.0$ & $70.3 \pm 0.7 $ &0.40  & $76.3 \pm 1.3$  &  \colorbox{MyBlue}{$77.6 \pm 1.1$}  \\
         \bottomrule
    \end{tabular}}
    \label{tab:rate}
\end{table}

\section{Related Work}
\label{rela}

\vspace{-0.3em}

\paragraph{Neural Tangent Kernel.}  NTKs are used to describe the dynamics of infinitely-wide networks during gradient descent training. In the infinite-width limit, NTK converges to an explicit limiting kernel; besides, it stays constant during training, providing a convergence guarantee for over-parameterized networks \citep{jacot2018neural,lee2019wide,allen2019convergence,du2019gradient,zou2018stochastic}. 
Besides, NTK has been applied to various architectures and brought a wealth of results, such as orthogonal initialization \citep{huang2020neural}, convolutions \citep{arora2019exact}, SVM \citep{chen2021equivalence}, attention \citep{hron2020infinite}. As for graph networks, GNTK helps us understand how GNNs learn a class of smooth functions on graphs \citep{du2019graph} and how they extrapolate differently from multi-layer perceptron \citep{xu2020neural}.

\vspace{-0.5em}

\paragraph{Deep Graph Neural Networks.} 

Since deep GNNs suffer from the over-smoothing problem, a large and growing body of literature has made efforts in deepening GNNs. There is a line of methods that resort to \textit{residual connection} to retain their feature expressivity in deep layers. \cite{xu2018representation} use skip connection along with node representations from different neighborhood ranges to preserve the locality of node representations. \cite{klicpera2018predict} derive a personalized propagation of neural predictions (PPNP) based on personalized Pagerank. Other similar works using residual connection can also be seen in \citep{li2019deepgcns, gong2020geometrically, chen2020simple,liu2020towards}. Another line of approaches tackle the issue via regularization mechanisms, such as node/edge dropping \citep{hou2019measuring,rong2019dropedge}, batch normalization \citep{dwivedi2020benchmarking}, pair normalization \citep{zhao2019pairnorm}, and group normalization \citep{zhou2020towards}. Recently, a series of latest works~\citep{loukas2019graph,zeng2020deep,li2020deepergcn,cong2021provable,huang2020tackling} explore the underlying reasons for performance degradation towards mitigation solutions.


\vspace{-0.2em}
\section{Conclusion and Discussion}
\label{sec:dis}

\vspace{-0.3em}

In this work, we have characterized the asymptotic behavior of GNTK to measure the trainability of wide GCNs in the large depth. We prove that the trainability drops at an exponential rate due to the aggregation operation. Furthermore, we apply our theoretical framework to investigate to what extent residual connection-based techniques help deepen GCNs. We demonstrate that these techniques can merely slow down the decay rate, but are unable to solve the exponential decay problem in essence. To overcome the trainability loss problem, we further propose Critical DropEdge illuminated by our theoretical framework. The experimental results confirm that our method can mitigate the trainability problem of deep GCNs. Future research directions include designing a critical node-centric method so as to make better use of node information.
\newpage



\vspace{-2.0em}
\section*{Acknowledgments}

This work was partially supported by a Collaborative Research Project grant between The University of Sydney and Data61, Australia. We thank the anonymous reviewers for useful suggestions to improve the paper. We also thank Ye Su for helpful discussions. This work was also in collaboration with Digital Research Centre Denmark – DIREC, supported by Innovation Fund Denmark.

\bibliography{iclr2022_conference}
\bibliographystyle{iclr2022_conference}

\setcounter{prop}{0}
\setcounter{thm}{0}
\setcounter{lem}{0}
\setcounter{cor}{0}

\newpage
\appendix

\section{APPENDIX: A General GCN Architecture} \label{sec:overview}

GCNs iteratively update node features through aggregating and transforming the representation of their neighbors. Figure \ref{fig:overview} illustrates an overview of information propagation in a general GCN considered in this work.

\begin{figure*}[htp]
    \centering
    \includegraphics[width=0.8\textwidth]{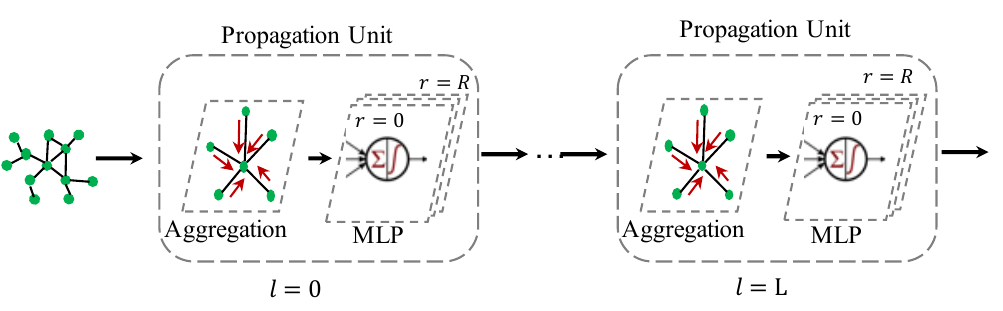}
    \caption{Overview of the information propagation in a general GCN.}
    \label{fig:overview}
\end{figure*}

\section{APPENDIX: Proofs for Theorem 1} \label{sec:thm1}

\subsection{Convergence of Aggregation GNTK}\label{sec:lem1}

In Theorem \ref{thm1} we demonstrate that matrix $\mathcal{A}(G)$ is a transition matrix, we first prove this proposition. To this end, we block the non-linear transformation by setting $R = 0$. Then, the propagation of GNTK can be expressed as,
\begin{equation} \label{eq:no_mlp}
 \Theta^{(l)} (u,u')  = \frac{1}{|\mathcal{N}(u)|+1} \frac{1}{|\mathcal{N}(u')|+1} \sum_{v \in \mathcal{N}(u) \cup u} \sum_{v' \in \mathcal{N}(u') \cup u'} \Theta^{(l-1)} (v,v')
 \end{equation}
In order to facilitate the calculation, we rewrite Equation (\ref{eq:no_mlp}) as the following format,
\begin{equation}
 \vec{\Theta}^{(l)} (G)  = \mathcal{A}(G) \vec {\Theta}^{(l-1)} (G)
 \label{eq:transition}
\end{equation}
where $\vec{\Theta}^{(l)} (G) \in \mathbb{R}^{n^2 \times 1}$ is the result of being vectorized, and matrix $\mathcal{A}(G) \in \mathbb{R}^{n^2 \times n^2}$ is a square matrix. We show the key result that $\mathcal{A}(G)$ is 
a probability transition matrix, and the limiting behavior of $\Theta^{(l)} (G)$ in the following lemma,

\begin{lem} [Convergence of aggregation] \label{lem1}
Assume $R=0$, then $$\lim_{l \rightarrow \infty} \Theta^{(l)}(u,u') = \vec{\pi}(G)^T \vec \Theta^{(0)}(G)$$ 
where $\pi(G) \in \mathbb{R}^{n^2 \times 1}$, satisfying $ \mathcal{A}(G) \vec{\pi}(G)  = \vec{\pi}(G)$.
\end{lem}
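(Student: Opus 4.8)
The plan is to recognize the iteration $\vec\Theta^{(l)}(G) = \mathcal{A}(G)\vec\Theta^{(l-1)}(G)$ of (\ref{eq:transition}) as $l$ steps of a finite Markov chain and then invoke the convergence theorem for such chains. Iterating gives $\vec\Theta^{(l)}(G) = \mathcal{A}(G)^l\,\vec\Theta^{(0)}(G)$, so the whole statement reduces to understanding $\mathcal{A}(G)^l$ as $l\to\infty$.

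First I would show $\mathcal{A}(G)$ is a probability transition matrix. Reading off (\ref{eq:no_mlp}), the entry of $\mathcal{A}(G)$ in row $(u,u')$ and column $(v,v')$ is
\begin{equation}
\mathcal{A}(G)_{(u,u'),(v,v')} = \frac{\mathbf{1}[v\in\mathcal{N}(u)\cup\{u\}]}{|\mathcal{N}(u)|+1}\cdot\frac{\mathbf{1}[v'\in\mathcal{N}(u')\cup\{u'\}]}{|\mathcal{N}(u')|+1},
\end{equation}
i.e. $\mathcal{A}(G) = P(G)\otimes P(G)$, where $P(G)_{u,v} = \mathbf{1}[v\in\mathcal{N}(u)\cup\{u\}]/(|\mathcal{N}(u)|+1)$ is the transition matrix of the random walk on $G$ with a self-loop added at every vertex. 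Each $P(G)$ has non-negative entries and rows summing to one, hence so does the Kronecker product; thus $\mathcal{A}(G)$ is row-stochastic, a probability transition matrix on the product state space $V\times V$. (It is moreover reversible, with stationary weight proportional to $(|\mathcal{N}(u)|+1)(|\mathcal{N}(u')|+1)$ on the state $(u,u')$, which pins down the explicit form of $\vec\pi(G)$.)

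Next I would establish irreducibility and aperiodicity — the hypotheses inherited from Theorem \ref{thm1}. Aperiodicity is immediate because $u\in\mathcal{N}(u)\cup\{u\}$, so $P(G)$ has a strictly positive diagonal; irreducibility of $P(G)$ follows from connectedness of $G$; and the independent product of two irreducible aperiodic finite chains is again irreducible and aperiodic. Then the fundamental convergence theorem for finite Markov chains (equivalently, Perron--Frobenius applied to $\mathcal{A}(G)$) gives $\mathcal{A}(G)^l \to \mathbf{1}_{n^2}\,\vec\pi(G)^T$ as $l\to\infty$, geometrically, at a rate governed by the second-largest eigenvalue modulus of $\mathcal{A}(G)$, which is strictly below $1$; here $\vec\pi(G)$ is the unique stationary distribution, $\vec\pi(G)^T\mathcal{A}(G) = \vec\pi(G)^T$, $\mathbf{1}^T\vec\pi(G)=1$. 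Multiplying by $\vec\Theta^{(0)}(G)$ yields $\vec\Theta^{(l)}(G) \to \mathbf{1}_{n^2}\big(\vec\pi(G)^T\vec\Theta^{(0)}(G)\big)$, i.e. $\Theta^{(l)}(u,u')\to\vec\pi(G)^T\vec\Theta^{(0)}(G)$ for every pair $(u,u')$, as claimed.

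The routine parts — the Kronecker identification and checking stochasticity — are mechanical. The one place that needs genuine care, and the reason the statement holds at all, is that the self-loop forced by the $+1$ in the aggregation denominator is exactly what makes the chain aperiodic (a bipartite $G$ without it would oscillate and fail to converge), together with checking that the product of two such chains stays irreducible; after that the conclusion is a black-box application of the Markov chain limit theorem. I would also record, for later use in Theorem \ref{thm1}, that this argument delivers not merely the limit but an exponential rate $\alpha\in(0,1)$ equal to that second-largest eigenvalue modulus.
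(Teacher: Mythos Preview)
Your proposal is correct and follows essentially the same route as the paper: identify $\mathcal{A}(G)$ as the Kronecker product of the self-looped random-walk matrix with itself, verify row-stochasticity, and invoke Perron--Frobenius / the Markov chain limit theorem to conclude that $\mathcal{A}(G)^l$ converges to a rank-one matrix with rows $\vec\pi(G)^T$. Your version is in fact slightly more careful than the paper's, which does not explicitly check aperiodicity or irreducibility inside the lemma; your observation that the forced self-loop is precisely what delivers aperiodicity, and your remark on the explicit reversible stationary measure and the geometric rate, are welcome additions that the paper either defers or leaves implicit.
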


\begin{proof}
When $R=0$, Equation (\ref{eq:no_mlp}) reduces to,
$$
 \Theta^{(l)} (u,u')  = \frac{1}{|\mathcal{N}(u)|+1} \frac{1}{|\mathcal{N}(u')|+1}  \sum_{v \in \mathcal{N}(u) \cup u } \sum_{v' \in \mathcal{N}(u') \cup u' }  \Theta^{(l-1)} (v,v')
$$
In order to facilitate calculation, we rewrite the above equation in the format of matrix,
$$
 \vec{\Theta}^{(l)} (G)  = \mathcal{A}^{(l)} (G) \vec {\Theta}^{(l-1)} (G)
$$
where $\vec{\Theta}^{(l)} (G) \in \mathbb{R}^{n^2 \times 1}$, is the result of being vectorized. Thus, the matrix operation $\mathcal{A}^{(l)}(G) \in \mathbb{R}^{n^2 \times n^2}$. It is obvious that,
$$
\mathcal{A}^{(L)}(G) = \mathcal{A}^{(L-1)}(G) = \cdots = \mathcal{A}^{(l)}(G) = \cdots = 
\mathcal{A}^{(1)}(G) 
$$
This implies that the aggregation operation is the same for each layer. The next step is to prove $\mathcal{A}(G)$ is a stochastic matrix (transition matrix):
$$
\sum_j \mathcal{A}(G)_{ij} = 1.
$$
According to Equation (\ref{eq:no_mlp}),  $\mathcal{A}(G)$ can be expressed as a Kronecker product of two matrices, 
$$
\mathcal{A}(G) =
\big[\mathcal{B}(G) \mathcal{C}(G)\big] \otimes \big[\mathcal{B}(G) \mathcal{C}(G)\big]
$$
where $\mathcal{B}(G), \mathcal{C}(G) \in \mathbb{R}^{n \times n}$. We then analyse the two matrices separately:

(1) $\mathcal{B}(G)$ is a diagonal matrix, which corresponds to the factor $\frac{1}{\mathcal{N}(u)+1}$ .
\[
  \mathcal{B}(G) =
  \begin{pmatrix}
    \frac{1}{\mathcal{N}(u_1)+1} & & \\
    & \ddots & \\
    & & \frac{1}{\mathcal{N}(u_n)+1}
  \end{pmatrix}
\]

(2) The element of matrix $\mathcal{C}(G)$ is determined by whether there exists an edge between two vertices, 
$$
\mathcal{C}(G)_{ij} = 
\tilde \delta_{ij}
$$
where $\tilde \delta_{ij} =1$ if $i==j$ or there is an edge between vertex $i$ and $j$, otherwise $\tilde \delta_{ij} =0$.

We then use the property of matrix $\mathcal{B}$ and $\mathcal{C}$ before Kronecker product,
$$
\sum_j \big[\mathcal{B}(G) \mathcal{C}(G)\big]_{ij} = \frac{1}{\mathcal{N}(u_i)+1} \sum_j \tilde \delta_{ij}  =  \frac{1}{\mathcal{N}(u_i)+1}(\mathcal{N}(u_i)+1) = 1
$$
According to the definition of Kronecker product, we have,
$$
\sum_j\mathcal{A}^{(l)}(G)_{ij} = \sum_{b}  \sum_{d} [\mathcal{B}(G) \mathcal{C}(G)]_{ab}  [\mathcal{B}(G) \mathcal{C}(G)]_{cd} =1
$$
where $i = a+(c-1)n$, and $j = b+(d-1)n $.

So far, we have proved that $\mathcal{A}(G)$ is a stochastic matrix. According to the Perron-Frobenius Theory, a stationary probability vector $ \vec{\pi}(G) \in \mathbb{R}^{n^2 \times 1} $ is defined as a distribution, which does not change under application of the transition matrix; that is, it is defined as a probability distribution, which is also an eigenvector of the probability matrix, associated with eigenvalue 1:
$$
 \mathcal{A}(G) \vec{\pi}(G) =  \vec{\pi}(G)
$$
Note that the spectral radius of every stochastic matrix is at most 1 by Gershgorin circle theorem. Thus, the convergence rate is governed by the second largest eigenvalue.

Since $\lim_{l \rightarrow \infty} \mathcal{A}_{ij}^{(l)}(G) = \vec{\pi}_j(G)$, we have,
$$
\begin{aligned}
\lim_{l \rightarrow \infty} \vec \Theta^{(l)}(G) &= \lim_{l \rightarrow \infty} \mathcal{A}^l(G)  \vec \Theta^{(0)}(G) = \Pi(G)\vec \Theta^{(0)}(G)
\end{aligned}
$$
where $\Pi(G) = \begin{pmatrix}
           \vec{\pi}(G)^T \\
           \vec{\pi}(G)^T  \\
           \vdots \\
           \vec{\pi}(G)^T 
         \end{pmatrix}$ is the $n^2 \times n^2$ matrix all of whose rows are the stationary distribution.
Then, we can see that every element in $\vec \Theta^{(l)}(G)$  converges exponentially to an identical value, depending on the stationary distribution and initial state,
$$
\lim_{l \rightarrow \infty} \Theta^{(l)}(u,u')= \vec{\pi}(G)^T \vec \Theta^{(0)}(G)
$$

\end{proof}

\begin{rem}
This lemma can be extended to the multi-graph setting, where matrix $\mathcal{A}(G,G') \in \mathbb{R}^{nn' \times nn'}$ with respect to a pair of graphs $G, G'$ is a transition matrix as well, and $n'$ is the number of vertices in graph $G'$.
\end{rem}

\subsection{Convergence of MLP GPK}\label{sec:lem2}

Before we prove Theorem \ref{thm1}, we introduce the result for the Gaussian Process kernel $\Sigma^{(l)}_{(r)}(G)$ of a pure MLP. By doing so, we consider a network with only non-linear transformation, known as a pure MLP. This leads to $\Sigma^{(l)}_{(r)}(u,u') = \Sigma_{(r)}(u,u')$, where we use subscript $(r)$ to denote the layer index. And we rewrite the propagation function for GPK as follows,
\begin{equation} \label{eq:no_agg}
 \begin{aligned}
 \Sigma_{(r)} (u,u') = \sigma^2_w \mathbb{E}_{z_1, z_2 \sim \mathcal{N} \big(0, \tilde \Sigma_{(r-1)} \big) }  \big[\phi(z_1) \phi(z_2) \big] + \sigma^2_b
  \end{aligned}
\end{equation}
and the variance $ \tilde \Sigma_{(r)} \in \mathbb{R}^{2 \times 2}$ is, 
\begin{equation} \label{eq:gaussian}
\tilde \Sigma_{(r)} = \begin{pmatrix}
\Sigma_{(r)}(u,u) &  \Sigma_{(r)}(u,u') \\
 \Sigma_{(r)}(u',u) &  \Sigma_{(r)}(u',u')\\
\end{pmatrix}  
\end{equation}

The large depth behavior has been well studied in \cite{hayou2019impact}, and we introduce the result when the {\it edge of chaos} is realized. In particular, we set the value of hyper-parameters to satisfy,
\begin{equation}\label{eq:critical}
\sigma^2_w \int \mathcal{D}_z[\phi'(\sqrt{q^\ast} z)]^2 =1 
\end{equation}
where $q^\ast$ is the fixed point of diagonal elements in the covariance matrix, and $\int \mathcal{D}z = \frac{1}{\sqrt{2\pi}}\int dz e^{-\frac{1}{2}z^2}$ is the measure for a normal distribution. For the ReLU activation, equation (\ref{eq:critical}) requires $\sigma^2_w=2$ and $\sigma^2_b = 0$.

The key idea is to study the asymptotic behavior of the normalized correlation defined as,
\begin{equation}
C_{r}(u,u') \equiv \frac{\Sigma_{(r)}(u,u')}{\sqrt{\Sigma_{(r)}(u,u)~ \Sigma_{(r)}(u',u')}}. 
\end{equation}

\begin{lem} [Proposition 1 and 3 in \cite{hayou2019impact}] \label{lem2}
Assume a network without aggregation, i.e., $L =0$, with a Lipschitz nonlinearity $\phi$, then, 
\begin{itemize}
\item $\phi(x) = (x)_{+}$,
$
1-C_{r}(u,u') \sim \frac{9\pi^2}{2 r^2} ~{\rm as}~  r \rightarrow \infty 
$
\item $\phi(x) = \tanh(x)$, 
$
1-C_{r}(u,u') \sim \frac{\beta}{r} ~{\rm as} ~  r \rightarrow \infty 
$
where $\beta = \frac{2\int_{\mathcal{D}_z}[\phi'(\sqrt{q^\ast}z)^2]}{q \int_{\mathcal{D}_z}[\phi''(\sqrt{q^\ast}z)^2]}$.
\end{itemize}
\end{lem}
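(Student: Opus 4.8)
The plan is to collapse the pair-recursion for $\Sigma_{(r)}$ into a one-dimensional recursion for the normalized correlation $C_r(u,u')$ and then read off the rate from the behavior of that recursion near its fixed point $c=1$. Fix two nodes $u,u'$ and set $q_r=\Sigma_{(r)}(u,u)$. On the edge of chaos the diagonal has a fixed point $q^\ast$: for ReLU with $\sigma_w^2=2,\sigma_b^2=0$ it is exactly preserved, $q_r\equiv q^\ast$, and for $\tanh$ one first checks $q_r\to q^\ast$ geometrically, which is immediate because $q\mapsto\sigma_w^2\,\mathbb{E}_{z\sim\mathcal{N}(0,q)}[\phi(z)^2]+\sigma_b^2$ is a contraction at $q^\ast$. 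Then $C_r=\Sigma_{(r)}(u,u')/q^\ast$ obeys $C_{r+1}=f(C_r)$ with correlation map $f(c)=(q^\ast)^{-1}\bigl(\sigma_w^2\,\mathbb{E}_{(z_1,z_2)\sim\mathcal{N}(0,\,q^\ast\Lambda(c))}[\phi(z_1)\phi(z_2)]+\sigma_b^2\bigr)$, where $\Lambda(c)$ is the $2\times 2$ correlation matrix with off-diagonal entry $c$. By Gaussian integration by parts (Price's theorem), $f'(c)=\sigma_w^2\,\mathbb{E}[\phi'(\sqrt{q^\ast}w_1)\phi'(\sqrt{q^\ast}w_2)]$, and in particular $f(1)=1$, $f'(1)=\sigma_w^2\int\mathcal{D}z\,[\phi'(\sqrt{q^\ast}z)]^2=1$ — this is exactly the edge-of-chaos condition (\ref{eq:critical}) — and $f$ is non-decreasing. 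A short monotonicity argument that $f(c)>c$ on $(-1,1)$ (for ReLU this reduces, after writing $c=\cos\theta$, to $\tan\theta>\theta$; for $\tanh$ it follows from $f''>0$ near $1$) shows $C_r\uparrow 1$, so it only remains to pin down the rate.

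Next I would expand $f$ at $c=1$ and track $\varepsilon_r:=1-C_r\downarrow 0$. For $\tanh$ the activation is smooth, and a second application of Price's theorem gives $f''(c)=\sigma_w^2 q^\ast\,\mathbb{E}[\phi''(\sqrt{q^\ast}w_1)\phi''(\sqrt{q^\ast}w_2)]$, hence $f''(1)=\sigma_w^2 q^\ast\int\mathcal{D}z\,[\phi''(\sqrt{q^\ast}z)]^2=:2b>0$ and $1-f(1-\varepsilon)=\varepsilon-b\varepsilon^2+o(\varepsilon^2)$. For ReLU the map is the arccosine kernel $f(c)=\tfrac1\pi\bigl(\sqrt{1-c^2}+c(\pi-\arccos c)\bigr)$; substituting $c=1-\varepsilon$ with $\arccos(1-\varepsilon)=\sqrt{2\varepsilon}\,(1+\varepsilon/12+O(\varepsilon^2))$ and $\sqrt{1-c^2}=\sqrt{2\varepsilon}\,(1-\varepsilon/4+O(\varepsilon^2))$, the $O(\sqrt\varepsilon)$ contributions cancel and one is left with $1-f(1-\varepsilon)=\varepsilon-\tfrac{2\sqrt2}{3\pi}\varepsilon^{3/2}+O(\varepsilon^2)$. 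In both cases the correlation recursion is therefore $\varepsilon_{r+1}=\varepsilon_r-a\varepsilon_r^{1+\gamma}+o(\varepsilon_r^{1+\gamma})$, with $(a,\gamma)=\bigl(\tfrac{2\sqrt2}{3\pi},\tfrac12\bigr)$ for ReLU and $(a,\gamma)=(b,1)$ for $\tanh$.

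Finally I would solve this scalar recursion through the standard substitution $u_r=\varepsilon_r^{-\gamma}$: from $\varepsilon_{r+1}=\varepsilon_r\bigl(1-a\varepsilon_r^{\gamma}+o(\varepsilon_r^\gamma)\bigr)$ one gets $u_{r+1}=u_r+a\gamma+o(1)$, so by Stolz--Cesaro $u_r\sim a\gamma\,r$, i.e. $\varepsilon_r\sim (a\gamma\,r)^{-1/\gamma}$. For ReLU, $\gamma=\tfrac12$ gives $\varepsilon_r\sim (a\gamma\,r)^{-2}=(a\gamma)^{-2}r^{-2}=\tfrac{9\pi^2}{2}\,r^{-2}$; for $\tanh$, $\gamma=1$ gives $\varepsilon_r\sim (b r)^{-1}=\dfrac{2}{\sigma_w^2 q^\ast\int\mathcal{D}z\,[\phi''(\sqrt{q^\ast}z)]^2}\,r^{-1}$, and substituting $\sigma_w^2=1/\!\int\mathcal{D}z\,[\phi'(\sqrt{q^\ast}z)]^2$ from (\ref{eq:critical}) turns the constant into $\beta=\dfrac{2\int\mathcal{D}z\,[\phi'(\sqrt{q^\ast}z)]^2}{q^\ast\int\mathcal{D}z\,[\phi''(\sqrt{q^\ast}z)]^2}$, which is the claimed $1-C_r\sim\beta/r$.

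The parts I expect to be genuinely delicate — the two places the $\sim$ notation hides work — are: (i) upgrading the \emph{local} expansion near $c=1$ to the conclusion that $C_r\to1$ from an arbitrary admissible initial correlation, which needs the global monotonicity/fixed-point analysis of $f$ on $[-1,1]$; and (ii) propagating the $o(\cdot)$ remainder uniformly through the $u_r=\varepsilon_r^{-\gamma}$ recursion so that Stolz--Cesaro actually applies, which I would handle by sandwiching $f$ between the two maps $c\mapsto c-(a\pm\delta)(1-c)^{1+\gamma}$ on a neighborhood of $1$ and comparing the induced monotone recursions. For ReLU there is the extra bookkeeping of extracting the coefficient $\tfrac{2\sqrt2}{3\pi}$ correctly from the non-analytic $\arccos$ expansion; for $\tanh$ the only analytic subtlety is differentiating twice under the Gaussian expectation, which is licensed by the boundedness of $\phi,\phi',\phi''$.
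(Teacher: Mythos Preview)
Your proposal is correct and follows essentially the same route as the paper's proof: reduce to the one-dimensional correlation recursion $C_{r+1}=f(C_r)$, expand $f$ at the fixed point $c=1$ (using the closed-form arccosine kernel for ReLU and $f''(1)=\sigma_w^2 q^\ast\!\int\mathcal{D}z\,[\phi''(\sqrt{q^\ast}z)]^2$ for $\tanh$), and then analyze $\varepsilon_r=1-C_r$ via the substitution $u_r=\varepsilon_r^{-\gamma}$ with $\gamma=\tfrac12$ (ReLU) or $\gamma=1$ ($\tanh$). Your invocation of Price's theorem for the derivatives and of Stolz--Ces\`aro for the telescoping asymptotic are the same computations the paper carries out by hand, and your explicit discussion of the global monotonicity of $f$ and of the remainder propagation is more careful than the paper, which simply asserts these steps.
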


\begin{proof}
We first decompose the integral calculation in the Equation (\ref{eq:no_agg}) into two parts, diagonal elements and non-diagonal elements:
$$
\Sigma_{(r)}(u,u) = \sigma^2_w \int_{\mathcal{D}z} \phi^2(\sqrt{\Sigma_{(r-1)}(u,u)} z) + \sigma^2_b
$$
$$
\Sigma_{(r)}(u,u') = \sigma^2_w \int_{\mathcal{D}z_1 \mathcal{D}z_2 } \phi(u_1) \phi(u_2) + \sigma^2_b
$$
where $u_1 = \sqrt{\Sigma_{(r-1)}(u,u)} z_1$, and $u_2 = \sqrt{\Sigma_{(r-1)}(u',u')}\bigg(C_{r-1}(u,u') z_1+\sqrt{1-C^2_{r-1}(u,u')}z_2\bigg)$, with $
C_{r-1}(u,u') = \Sigma_{(r-1)}(u,u')/\sqrt{\Sigma_{(r-1)}(u,u) \Sigma_{(r-1)}(u',u')}
$.

For simplicity, we define $q_r(u) = \Sigma_{(r)}(u,u) $, $q_r(u') = \Sigma_{(r)}(u',u') $. We then proceed with the proof by dividing the activation $\phi(x)$ into two classes, namely, ReLU and Tanh.

\paragraph{ReLU activation, $\phi(x) = \max \{0, x\}$.}The recursive equation (\ref{eq:no_agg}) for $q_r(u)$ reduces to, 
$$
q_r(u) = \frac{\sigma^2_w}{2} q_{r-1}(u) + \sigma^2_b
$$
The edge of chaos condition $\sigma^2_w \int \mathcal{D}_z[\phi'(\sqrt{q^\ast} z)]^2 =1 $ requires $\sigma^2_w=2$ and $\sigma^2_b = 0$ for ReLU activation, which leads to,
$$
\lim_{r \rightarrow \infty} q_{r}(u) = q_0(u) \equiv  q(u)
$$
Then the second integration for $C_r(u,u')$ becomes,
$$
C_r(u,u') = \frac{\sigma^2_w \int_{\mathcal{D}z_1 \mathcal{D}z_2} \phi \big(\sqrt{q_{r-1}(u)}z_1 \big)\phi\big(\sqrt{q_{r-1}(u)}(C_{r-1}(u,u')z_1 +\sqrt{1-C_{r-1}(u,u')^2} z_2)\big) + \sigma^2_b}{q_{r-1}(u)}
$$

To investigate the propagation of $C_r(u,u')$, we set $q_{r}(u) = q_{r}(u') = q$, and define, 
$$
f(x) = \frac{\sigma^2_w \int_{\mathcal{D}z_1 \mathcal{D}z_2} \phi \big(\sqrt{q}z_1 \big)\phi\big(\sqrt{q}(xz_1 +\sqrt{1-x^2} z_2)\big) + \sigma^2_b}{q}
$$


Let $x \in [0,1]$, the derivative of $f(x)$ satisfies,
$$
f'(x) = 2 \int_{\mathcal{D} z_1 \mathcal{D}z_2} 1_{z_1>0} 1_{x z_1 +\sqrt{1-x^2} z_2 >0}
$$
This can be seen from a simple derivation,
$$
\begin{aligned}
f'(x) & = \int_{\mathcal{D}_{z_1}} \int_{\mathcal{D}_{z_2}} \phi(z_1) \phi'(x z_1 + \sqrt{1-x^2} z_2)(z_1 - \frac{x}{\sqrt{1-x^2}}z_2) \\
& =\int_{\mathcal{D}_{z_1}} \int_{\mathcal{D}_{z_2}} \phi(z_1) \phi'(x z_1 + \sqrt{1-x^2} z_2)(z_1)  - \int_{\mathcal{D}_{z_1}} \int_{\mathcal{D}_{z_2}} \phi(z_1) \phi'(x z_1 + \sqrt{1-x^2} z_2)(\frac{x}{\sqrt{1-x^2}}z_2)
\end{aligned}
$$
Using an identity for Gaussian integration $\int_{\mathcal{D}_z} zg(z) = \int_{\mathcal{D}_z} g'(z)$ yields,
$$
\begin{aligned}
f'(x) & = \int_{\mathcal{D}_{z_1}} \int_{\mathcal{D}_{z_2}} \big[\phi'(z_1) \phi'(xz_1 + \sqrt{1-x^2} z_2) + \phi(z_1) \phi''(xz_1 + \sqrt{1-x^2} z_2) - \phi(z_1) \phi''(xz_1 + \sqrt{1-x^2} z_2) \big] \\
& = \int_{\mathcal{D}_{z_1}} \int_{\mathcal{D}_{z_2}} \phi'(z_1) \phi'(xz_1 + \sqrt{1-x^2} z_2)
\end{aligned}
$$

Then the second derivative of $f(x)$ becomes,
$
f''(x) = \frac{1}{\pi \sqrt{1-x^2}}
$.
So using the equation above and the condition $f'(0)=\frac{1}{2}$, we can obtain another form of the derivative of $f(x)$,
$$
f'(x) = \frac{1}{\pi} \arcsin(x)+\frac{1}{2}
$$
Because $\int \arcsin =  x \arcsin + \sqrt{1-x^2}$ and $f(1) = 1$, then for $x \in [0,1]$,
$$
f(x) = \frac{2x \arcsin(x) + 2\sqrt{1-x^2} + x \pi}{2\pi}
$$
Substituting $f(x) = C_{r}(u,u')$ and $x=C_{r-1}(u,u')$, into expression above here, we have,
$$
C_r(u,u') = \frac{2C_{r-1}(u,u') \arcsin(C_{r-1}(u,u'))+2\sqrt{1-C^2_{r-1}(u,u')} + C_{r-1}(u,u') \pi}{2\pi}
$$

Now we study the behavior of $C_{r}(u,u')$ as $r$ tends to infinity. Using Taylor expansion, we have,
$$
f(x)|_{x \rightarrow 1-}  = x + \frac{2\sqrt{2}}{3\pi}(1-x)^{3/2}+O((1-x)^{5/2})
$$
By induction analysis, the sequence $C_{r}(u,u')$ increases to the fixed point 1. Besides, we can replace $x$ with $C_{r}(u,u')$,
$$
C_{r+1}(u,u') = C_r(u,u') + \frac{2\sqrt{2}}{3\pi}(1-C_r(u,u'))^{3/2}+O((1-C_r(u,u'))^{5/2})
$$
Let $\gamma_r = 1 - C_r(u,u')$, then we have,
$$
\gamma_{r+1} = \gamma_r - \frac{2\sqrt{2}}{3\pi} \gamma_r^{3/2}+O(\gamma_r^{5/2})
$$
so that,
$$
\gamma^{-1/2}_{r+1} = \gamma^{-1/2}_{r}(1-\frac{2\sqrt{2}}{3\pi} \gamma_r^{1/2} +O(\gamma_r^{3/2}))^{-1/2} \\
= \gamma_r^{-1/2} + \frac{\sqrt{2}}{3\pi} + O(\gamma_r)
$$
As $r$ tends to infinity, we have,
$$
\gamma_{r+1}^{-1/2} - \gamma_r^{-1/2} \sim \frac{\sqrt{2}}{3\pi}
$$
It means,
$$
\gamma_r^{-1/2} \sim \frac{\sqrt{2}}{3\pi} r
$$
Therefore, we have,
$$
1 - C_r(u,u') \sim \frac{9\pi^2}{2 r^2}
$$

\paragraph{Tanh activation, $\phi(x) = \tanh(x)$.}  For the argument of fixed point, we ask readers to refer to \cite{hayou2019impact}. Here, we only discuss the convergence rate of GPK, which means we directly assume that $f(x)$ tends to 1 as the depth tends to infinity $\lim_{r \rightarrow \infty} C_r(u,u') = 1 $.
For the function $f(x)$, a Taylor expansion near 1 yields,
$$
f(x) = 1 + (x-1)f'(1) + \frac{(x-1)^2}{2}f''(1) +O((x-1)^{5/2})
$$
where $f'(1) = \sigma^2_w \int_{\mathcal{D}_{z}}[\phi'(\sqrt{q}z)^2]$, and $f''(1) = \sigma^2_w q \int_{\mathcal{D}_{z}}[\phi''(\sqrt{q}z)^2]$.
Denote $\gamma_r = 1 - C_r(u,u')$, then we have,
$$
\gamma_{r+1} = \gamma_r - \frac{\gamma^2_r}{\beta} + O(\gamma_l^{5/2})
$$
where $\beta = \frac{2\int_{\mathcal{D}_z}[\phi'(\sqrt{q^\ast}z)^2]}{q \int_{\mathcal{D}_z}[\phi''(\sqrt{q^\ast}z)^2]}$. Therefore, $$
\gamma_{r+1}^{-1} = \gamma_r^{-1}(1-\frac{\gamma_r}{\beta} + O(\gamma_r^{3/2})) = \gamma_r^{-1} + \frac{1}{\beta} +O(\gamma_r^{1/2}).
$$

Thus, we have,
$$
1-C_{(r)}(u,u') \sim \frac{\beta}{r} ~{\rm as} ~  r \rightarrow \infty 
$$
\end{proof}

Lemma \ref{lem2} shows that the covariance matrix converges to a constant matrix at a polynomial rate of $1/r^2$ for ReLU and of $1/r$ for tanh activation on the edge of chaos. This implies that a network without aggregation could retain its expressivity at a large depth. However, for general cases, due to aggregation, the rate of convergence would change from polynomial to exponential. This would cause the trainability of deep GNNs to be problematic, as shown in the following section.

\subsection{Convergence of GPK for GCNs}

Then we formally characterize the convergence of Gaussian Process kernel $\Sigma^{(l)}_{(r)}(u,u')$ of a GCN in the infinite-width limit:

\begin{lem} \label{thm0}
If $\mathcal{A}(G)$ is ireducible and aperiodic, with a stationary distribution vector $\vec{\pi}(G)$, then there exist constants $0 < \alpha < 1$ and $C >0$, and constant vector $\vec{v} \in \mathbb{R}^{n^2 \times 1}$ depending on the number of MLP iterations $R$, such that
$$
|\Sigma^{(l)}_{(r)}(u,u') - \vec{\pi}(G)^T \vec{v}| \le C \alpha^l
$$
\end{lem}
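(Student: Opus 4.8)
The plan is to combine the two ingredients already established: the exponential convergence of the pure-aggregation iteration governed by the Markov matrix $\mathcal{A}(G)$ (Lemma \ref{lem1}) and the polynomial convergence of the pure-MLP Gaussian process kernel toward a fixed point (Lemma \ref{lem2}). The key observation is that one propagation unit of the GCN applies $R$ non-linear transformations (Equation \eqref{eq:prop2}) followed by one aggregation (Equation \eqref{eq:prop1}), so over $l$ units the covariance matrix $\vec\Sigma^{(l)}_{(r)}(G)$ is obtained by alternately applying a (nonlinear, contractive toward a common value) MLP map and the (linear, stochastic) matrix $\mathcal{A}(G)$. First I would vectorize, writing $\vec\Sigma^{(l)}_{(0)}(G) = \mathcal{A}(G)\,\vec\Sigma^{(l-1)}_{(R)}(G)$, and analyze the composed operator acting on $\vec\Sigma^{(l)}_{(R)}(G)$.

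The core of the argument is to track the deviation from the fixed point. Since $\mathcal{A}(G)$ is irreducible and aperiodic with stationary vector $\vec\pi(G)$, there is a spectral gap: $\|\mathcal{A}(G)^k w - \Pi(G) w\| \le C_0\,\alpha_0^{\,k}\|w\|$ for some $0<\alpha_0<1$, where $\Pi(G)$ has all rows equal to $\vec\pi(G)^T$. The MLP step, by Lemma \ref{lem2} and the edge-of-chaos normalization, drives the off-diagonal (correlation) structure toward the rank-one "all entries equal" configuration while leaving the diagonal fixed at $q^\ast$; crucially its linearization at the fixed point is non-expansive (the derivative $f'(1)$ equals $1$ at criticality for ReLU, and likewise the relevant contraction factor is $\le 1$), so the MLP step does not amplify the component that $\mathcal{A}(G)$ contracts, and in fact the composition of one MLP block with one aggregation step is a contraction with rate bounded by $\alpha := \alpha_0^{1/R'}$-type constant strictly less than $1$ (one may absorb $R$ into the constant, since $R$ is fixed). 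Decompose $\vec\Sigma^{(l)}_{(r)}(G) = \vec\pi(G)^T\vec v \cdot \mathbf{1} + \epsilon^{(l)}_{(r)}$; show $\|\epsilon^{(l)}_{(r)}\| \le C\alpha^l$ by induction on $l$, using that the aggregation step contracts $\epsilon$ geometrically and the (at most $R$) MLP steps within a unit change $\|\epsilon\|$ by at most a bounded multiplicative factor plus a lower-order additive term controlled by the polynomial rate from Lemma \ref{lem2}. Reading off the single entry $\Sigma^{(l)}_{(r)}(u,u')$ from the limiting vector gives $\vec\pi(G)^T\vec v$, where $\vec v = \lim_{l\to\infty}\vec\Sigma^{(l)}_{(R)}(G)$ exists by the same contraction.

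The main obstacle I anticipate is handling the \emph{interaction} between the two maps rigorously: the MLP map is nonlinear and only the \emph{correlations} $C_r(u,u')$ (not the raw covariances) satisfy the clean recursion of Lemma \ref{lem2}, whereas $\mathcal{A}(G)$ acts naturally on the covariances themselves. Reconciling these requires care — e.g. showing the diagonal entries stay pinned near $q^\ast$ (so that covariance deviations and correlation deviations are comparable up to constants), and ensuring the aggregation step, which mixes different pairs $(u,u')$, does not reintroduce a fixed-point-escaping direction that the MLP then fails to damp. A secondary technical point is justifying that the composition remains a strict contraction uniformly in $l$ — i.e. that the MLP linearization's non-expansiveness combined with $\mathcal{A}(G)$'s strict contraction on the mean-zero subspace yields a product rate $<1$; this is where the irreducibility/aperiodicity hypothesis is essential, and where one must be careful that the "one" eigenvalue of $\mathcal{A}(G)$ corresponds exactly to the $\mathbf{1}$ direction that the MLP preserves, so the two fixed-point subspaces are aligned rather than transverse.
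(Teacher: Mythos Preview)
Your proposal is correct and follows essentially the same approach as the paper: induct on $l$, split each propagation unit into the $R$ MLP steps (which, near the fixed point, perturb the covariance only by a higher-order $O(\alpha^{3l/2})$ term via the $(1-x)^{3/2}$ expansion of $f$ at $x=1$) and the aggregation step (which contracts the deviation from $\vec\pi(G)^T\vec v$ by a factor $\alpha$ via the Markov spectral gap of $\mathcal{A}(G)$), and combine. The obstacles you flag---pinning the diagonals near $q^\ast$ so that covariance and correlation deviations are comparable, and checking that the MLP's fixed-point direction coincides with the $\mathbf{1}$-eigenvector of $\mathcal{A}(G)$---are exactly the points the paper handles (the first somewhat implicitly), and your ``bounded multiplicative factor plus lower-order additive term'' is made precise there as $(C_0 + R C_1)\alpha^l$.
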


\begin{proof}
We prove the result via an induction method. For $l=0$, according to the Cauchy-Buniakowsky-Schwarz Inequality
$$
\Sigma^{(0)}_{(0)}(u,u') = h^{(0)}_u h^{(0)}_{u'} \le \norm{h^{(0)}_u}_2 \norm{h^{(0)}_{u'}}_2 = 1
$$
Thus, there is a constant $C$, depending on $G(V,E)$, and the number of MLP operations $R$, over feature initialization, such that,
$$
\big|\Sigma^{(0)}_{(0)}(u,u')- \vec{\pi}(G)^T \vec{v} \big| < C
$$

Assume the result is valid for $\Sigma^{(l)}_{(r)}(u,u')$, then we have,
$$
\big|\Sigma^{(l)}_{(r)}(u,u') - \vec{\pi}(G)^T \vec{v} \big| \le C_0 \alpha^l
$$
where $C_0$ is a constant satisfying $0 < C_0 < C$.
Now we consider the distance between $\Sigma^{(l+1)}_{(r)}(u,u')$ and $C \alpha^{l+1}$. To compute this, we need to divide the propagation from $l$ layer to $l+1$ layer into three steps:

\begin{enumerate} 
    \item $\Sigma^{(l)}_{(r)} \rightarrow \Sigma^{(l)}_{(r+1)}\rightarrow \cdots\rightarrow \Sigma^{(l)}_{(R)} $

    \item $\Sigma^{(l)}_{(R)} \rightarrow \Sigma^{(l+1)}_{(0)}$
    
    \item $\Sigma^{(l+1)}_{(0)} \rightarrow \Sigma^{(l+1)}_{(1)} \rightarrow \cdots \rightarrow  \Sigma^{(l+1)}_{(r)}$.
\end{enumerate}

It is not hard to find that steps 1 and 3 correspond to non-linear transformation while step 2 corresponds to aggregation operation, we then characterize them one by one,
\paragraph{MLP Propagation} The assumption $\big|\Sigma^{(l)}_{(r)}(u,u') - \vec{\pi}(G)^T \vec{v} \big| \le C_0 \alpha^l$ implicitly implies that $C_r(u,u')$ is close to one. Because 
$
C_r(u,u') =  \frac{\Sigma^{(l)}_{(r)}(u,u')}{\sqrt{\Sigma^{(l)}_{(r)}(u,u)~ \Sigma^{(l)}_{(r)}(u',u')}}   $, then,
$$
\frac{\vec{\pi}(G)^T\vec{v} - C_0 \alpha^l }{\vec{\pi}(G)^T \vec{v} + C_0 \alpha^l} \le  C_r(u,u') \le \frac{\vec{\pi}(G)^T\vec{v} + C_0 \alpha^l }{\vec{\pi}(G)^T \vec{v} - C_0 \alpha^l}
$$
Because $C_0 \alpha^l \ll \vec{\pi}(G)^T \vec{v}$, we have $C_r(u,u') = 1 + O(\alpha^l)$. Recall the property of MLP propagation function $f(x)$ for $x = C_{r}(u,u')$, when $ C_{r}(u,u')$ is close to 1:
$$
f(x)|_{x \rightarrow 1-}   = x + \frac{2\sqrt{2}}{3\pi}(1-x)^{3/2}+O((1-x)^{5/2}) 
$$
This implies,
$$
\begin{aligned}
| C_{(r+1)}(u,u') -C_{(r)}(u,u') | & =  \frac{2\sqrt{2}}{3\pi}( 1 -  C_{(r)}(u,u'))^\frac{3}{2} + O( 1 -  C_{(r)}(u,u'))^\frac{5}{2}  = O(\alpha^{\frac{3}{2}l}) 
\end{aligned}
$$
With this result, we can further obtain,
$$
\begin{aligned}
&|\Sigma^{(l)}_{(r+1)}(u,u') - \vec{\pi}(G)^T \vec{v}| = |\Sigma^{(l)}_{(r+1)}(u,u') - \Sigma^{(l)}_{(r)}(u,u')+ \Sigma^{(l)}_{(r)}(u,u') - \vec{\pi}(G)^T \vec{v}| \\
&\le |\Sigma^{(l)}_{(r+1)}(u,u') -\Sigma^{(l)}_{(r)}(u,u')| + |\Sigma^{(l)}_{(r)}(u,u') - \vec{\pi}(G)^T \vec{v}| \\
& = |C^{(l)}_{(r+1)}(u,u') - C^{(l)}_{(r)}(u,u') | \sqrt{\Sigma^{(l)}_{(r)}(u,u)\Sigma^{(l)}_{(r)}(u',u')}+ |\Sigma^{(l)}_{(r)}(u,u') -\vec{\pi}(G)^T \vec{v}|  \\
& = C_1 \alpha^{\frac{3}{2}l} + C_0 \alpha^l \le (C_0 +C_1) \alpha^l .
\end{aligned}
$$
where $C_1$ is a positive constant. Repeat the proof process, we have a relation for  $\Sigma^{(l)}_{(R)}(u,u')$ at the last step of 1,
\begin{equation} \label{eq:last_step}
|\Sigma^{(l)}_{(R)}(u,u') - \vec{\pi}(G)^T \vec{v}| \le (C_0 + (R-r) C_1) \alpha^l.
\end{equation}

\paragraph{Aggregation Propagation} We go through an aggregation operation $\mathcal{A}(G)$. In this case, we use a matrix form, and take equation (\ref{eq:last_step}) into aggregation function, yielding the result as follows,
$$
\vec \Sigma^{(l+1)}_{(0)}(G) =
 \mathcal{A}(G) \vec \Sigma^{(l)}_{(R)}(G) = \mathcal{A}(G)(\Pi(G) \vec{v}+\vec O(\alpha^l))
$$
where $\vec  O(\alpha^l) \in \mathbb{R}^{n^2 \times 1}$ denotes a vector in which every element is bounded by $\alpha^l$. 

According to Theorem 4.9 in \cite{freedman2017convergence}, we have $\mathcal{A}^l \Pi(G) \vec{v} = \Pi(G) \vec{v}$ as $l \rightarrow \infty$. When $l$ is finite, the error is of exponential decay. Here, we use $0 <\alpha < 1$ to denote the corresponding base number. Therefore,
$$
|\Sigma^{(l+1)}_{(0)}(u,u') - \vec{\pi}(G)^T \vec{v}| \le (C_0 + (R-r) C_1) \alpha^{l+1}.
$$

Finally, by repeating the result in step 1, we have,
$$
|\Sigma^{(l+1)}_{(r)}(u,u') - \vec{\pi}(G)^T \vec{v}| \le (C_0 + R C_1) \alpha^{l+1}  =  C \alpha^{l+1}.
$$
\end{proof}

\begin{rem}
In the proof, there is a $R C_1$ term in each propagation of $l \rightarrow l+1$, which may lead to explosion when $l$ tends to infinity. Basically, this problem can be solved by a careful analysis, because the constant $C_1$ is associated with $O(\alpha^{\frac{3}{2}l})$, which has a smaller order compared to $O(\alpha^{l})$. 
\end{rem}

\subsection{Convergence of GNTK}

Finally, we arrive at our main theorem:

\begin{thm} [Convergence rate of GNTK] 
If transition matrix $\mathcal{A}(G) \in \mathbb{R}^{n^2 \times n^2}$ is irreducible and aperiodic, with a stationary distribution vector $\vec{\pi}(G) \in \mathbb{R}^{n^2 \times 1}$, where $ \vec{\Theta}^{(l)}_{(0)} (G)  = \mathcal{A}(G) \vec {\Theta}^{(l-1)}_{(R)} (G)$ and $\vec{\Theta}^{(l)}_{(r)} (G) \in \mathbb{R}^{n^2 \times 1}$ is the result of being vectorized. Then there exist constants $0 < \alpha < 1$ and $C >0$, and constant vectors $\vec{v},\vec{v}' \in \mathbb{R}^{n^2 \times 1}$ depending on the number of MLP iterations $R$, such that
$
\big|\Theta^{(l)}_{(r)}(u,u') - \vec{\pi}(G)^T  \big( R l \vec{v}  + \vec{v}'\big)\big| \le C \alpha^l
$.
\end{thm}

\begin{proof}
This proof has the same strategy to that of Lemma \ref{thm0}. The first step is to understand Equation (\ref{eq:ntk2}) in the large-depth limit.

$$
\Theta^{(l)}_{(r)} (u,u') = \Theta^{(l)}_{(r-1)} (u,u') \dot \Sigma^{(l)}_{(r)} (u,u') + \Sigma^{(l)}_{(r)} (u,u') 
$$

According to the result of Lemma \ref{thm0}, we have already known,
$$ \Sigma^{(l)}_{(r)} (u,u') =  \vec{\pi}(G)^T \vec{v}+O(\alpha^l) 
$$

To proceed the proof, we need to work out the behavior of $\dot \Sigma^{(l)}_{(r)} (u,u')$ in the large depth.

\paragraph{ReLU activation, $\phi(x) = \max \{0, x\}$.}
Recall that we define $C_{r+1} = f(C_{r})$, and we have,
$$
f'(x) = \frac{1}{\pi} \arcsin(x)+\frac{1}{2}
$$

Then, at the edge of chaos, 
$$
\begin{aligned}
 \dot \Sigma_{(r)}(u,u') &=  f'(C_r(u,u')) = \frac{1}{\pi} \arcsin(C_r(u,u'))+\frac{1}{2}
\\
&=1-\frac{2}{\pi}(1-C_r(u,u'))^{1/2}+O(1-C_r(u,u'))^{3/2} = 1 + O(\alpha^{l/2})
\end{aligned}
$$

\paragraph{Tanh activation, $\phi(x) = \tanh(x)$.} We have
$$
f'(x) = 1  - (x-1)f''(1)+O((x-1)^2)
$$
At the edge of chaos, 
$$
\begin{aligned}
 \dot \Sigma_{(r)}(u,u') 
&= 1 + O(\alpha^{l})
\end{aligned}
$$

Now we prove the result via an  induction method. For $l=0$, we directly have,
$$
\Theta^{(0)}_{(0)}(u,u') = \Sigma^{(0)}_{(0)}(u,u') \le \norm{h^{(0)}_u}_2 \norm{h^{(0)}_{u'}}_2 = 1
$$

Thus there is a constant $C$, depending on $G(V,E)$, and the number of MLP operations $R$, over feature initialization,
$$
|\Theta^{(0)}_{(0)}(u,u')-\vec{\pi}(G)^T \vec{v}' | < C
$$
Assume the result is valid for $\Theta^{(l)}_{(r)}(u,u')$, then we have,
$$
|\Theta^{(l)}_{(r)}(u,u') - \vec{\pi}(G)^T (Rl \vec{v}+\vec{v}') | \le C \alpha^l
$$
Now we consider the distance between $\Theta^{(l+1)}_{(r)}(u,u')$ and $\vec{\pi}(G)^T( R (l+1) \vec{v} +\vec{v}')$. To prove this, we need to divide the propagation from $l$ layer to $l+1$ layer into three steps:
\begin{enumerate} 
    \item $\Theta^{(l)}_{(r)} \rightarrow \Theta^{(l)}_{(r+1)}\rightarrow \cdots\rightarrow \Theta^{(l)}_{(R)} $

    \item $\Theta^{(l)}_{(R)} \rightarrow \Theta^{(l+1)}_{(0)}$
    
    \item $\Theta^{(l+1)}_{(0)} \rightarrow \Theta^{(l+1)}_{(1)} \rightarrow \cdots \rightarrow \Theta^{(l+1)}_{(r)}$.
\end{enumerate}

For step 1, we have, 
$$
\begin{aligned}
&|\Theta^{(l)}_{(r+1)}(u,u') - \vec{\pi}(G)^T((lR+1) \vec{v}+ \vec{v}')| \\
&= |\Sigma^{(l)}_{(r+1)}(u,u')+\dot \Sigma^{(l)}_{(r+1)}(u,u') \Theta^{(l)}_{(r)}(u,u') -\vec{\pi}(G)^T((lR+1) \vec{v}+\vec{v}')| \\
& = |\vec{\pi}(G)^T \vec{v}(1+O(\alpha^l)) +(1+O(\alpha^{l/2}))\Theta^{(l)}_{(r)}(u,u')  -\vec{\pi}(G)^T ((lR+1) \vec{v}+\vec{v}') | \le C \alpha^l
\end{aligned}
$$

Repeat the process, we have a relation for  $\Theta^{(l)}_{(R)}(u,u')$ at the last transformation in step 1,
$$
|\Theta^{(l)}_{(R)}(u,u') - \vec{\pi}(G)^T ((lR+R-r)\vec{v}+\vec{v}')| \le C \alpha^l.
$$
Secondly, we go through an aggregation operation. Because it is a Markov chain step, we have,
$$
|\Theta^{(l+1)}_{(0)}(u,u') - \vec{\pi}(G)^T ((Rl+R-r)\vec{v}+\vec{v}')| \le C \alpha^{l+1}.
$$

Repeat the result in step 1, we finally have,
$$
\begin{aligned}
&|\Theta^{(l+1)}_{(r)}(u,u') -\vec{\pi}(G)^T ((Rl+R)\vec{v}+\vec{v}')| \\ =&|\Theta^{(l+1)}_{(r)}(u,u') - \vec{\pi}(G)^T(R(l+1)\vec{v}+\vec{v}'|
\le C \alpha^{l+1}.
\end{aligned}
$$
\end{proof}

\section{Trainability of Ultra-Wide GCNs} \label{sec:cor1}

\begin{cor} [Trainability of ultra-wide GCNs] 
Consider a GCN of the form (\ref{eq:agg}) and (\ref{eq:non}), with depth $l$, number of non-linear transformations $r$, an MSE loss, and a Lipchitz activation, trained with gradient descent on a node classification task. Then, the output function follows, 
$
f_t(X) = e^{-\eta  \Theta^{(l)}_{(r)}(G) t} f_0(X) + (I - e^{-\eta  \Theta^{(l)}_{(r)}(G) t } Y) 
$,
where $X$ and $Y$ are node features and labels from training set. Then, $\Theta^{(l)}_{(r)}(G)$ is singular when $l \rightarrow \infty$. Moreover, there exists a constant $C>0$ such that for all $t>0$,
$
\norm {f_t(X)-Y}> C
$.
\end{cor}

\begin{proof}
According to the result from \cite{jacot2018neural}, GNTK $\Theta^{(l)}_{(r)}(G)$ converges to a deterministic kernel and remains constant during gradient descent in the infinite-width limit. We omit the proof procedure for this result, since it is now a standard conclusion in the NTK study.

Based on the conclusion above, \cite{lee2019wide} proved that the infinitely-wide neural network is equivalent to its linearized mode,
$$
f_t^{\rm lin}(X) = f_0(X) + \nabla_\theta f_0(X)|_{\theta = \theta_0} \omega_t
$$
where $\omega_t = \theta_t - \theta_0 $. We call it a linearized model because only zero and first order term of Taylor expansion are kept. Since we know the dynamics of gradient flow using this linearized function are governed by,
$$
\dot \omega_t = -\eta \nabla_\theta f_0(X)^T \nabla_{f^{\rm lin}_t(X)} \mathcal{L}
$$

$$
\dot f^{\rm lin}_t(X) = -\eta \Theta^{(l)}_{(r)}(G) \nabla_{f^{\rm lin}_t(X)} \mathcal{L}
$$
where $\mathcal{L}$ is an MSE loss, then the above equations have closed form solutions
$$
f^{\rm lin}_t(X) = e^{-\eta  \Theta^{(l)}_{(r)}(G) t} f_0(X) + (I - e^{-\eta\Theta^{(l)}_{(r)}(G) t }) Y
$$
Since Lee et al. \cite{lee2019wide} showed that $f_t^{\rm lin}(X) = f_t(X)$ in the infinite width limit, thus we have,
\begin{equation} \label{eq:f}
f_t(X) = e^{-\eta  \Theta^{(l)}_{(r)}(G) t} f_0(X) + (I - e^{-\eta\Theta^{(l)}_{(r)}(G) t }) Y
\end{equation}

In Theorem \ref{thm1}, we have shown $\Theta^{(l)}_{(r)}(G)$ converges to a constant matrix at an exponential rate, thus being singular in the large depth limit. According to Equation (\ref{eq:f}), we know that,
$$
\norm{f_t(X) - Y} =   \norm{e^{-\eta  \Theta^{(l)}_{(r)}(G) t} (f_0(X) -Y}
$$

Let $\Theta^{(\infty)}_{(r)}$ be the GNTK in the large depth limit, $ \Theta^{(\infty)}_{(r)}( G)  = Q^T D Q$ be the decomposition of the GNTK, where $Q$ is an orthogonal matrix and $D$ is a diagonal matrix. Because $\Theta^{(\infty)}_{(r)}(G)$ is singular, $D$ has at least one zero value $d_j=0$, then

$$
\norm{f_t(X)-Y} = \norm{Q^T(f_t(X)-Y)Q} \ge \norm{[Q^T(f_0(X)-Y)Q]_j} 
$$
\end{proof}

\begin{rem}
In the proof we assume the loss is MSE. Nevertheless, the conclusion regarding trainability can be applied to other common losses such as cross-entropy. For cross-entropy loss, even though we cannot derive an analytical expression for the solution, we can prove that the GNTK still governs the trainability and the law of GNTK is not affected by the loss. Thus, the results for trainability still hold in the cross-entropy case.
\end{rem}

\section{Convergence of Residual GNTK} \label{sec:residual}

\subsection{Residual connection in aggregation}\label{sec:agg}

\begin{thm} [Convergence rate of residual connection in aggregation] 
Consider a GNTK of non-linear transformation (\ref{eq:ntk2}) and residual connection (\ref{eq:ntk3}). Then with a stationary vector $\tilde{\vec{\pi}}(G)$ for $\mathcal{\tilde A}(G)$, there exist constants $0 <  \tilde{\alpha} < 1$ and $ C >0$, and constant vectors $\vec{v}$ and $\vec{v}'$ depending on $R$, such that
$
\big|\Theta^{(l)}_{(r)}(u,u') - \tilde{\vec{\pi}}(G)^T  \big( R l \vec{v}  + \vec{v}'\big)\big| \le C \tilde \alpha^l
$.
Furthermore, we denote the second largest eigenvalue of $\mathcal{A}(G)$ and $\mathcal{\tilde A}(G)$ as $\lambda_2$ and $\tilde \lambda_2$, respectively. Then,
$\tilde \lambda_2  > \lambda_2$.
\end{thm}

\begin{proof}
According to the aggregation function for covariance matrix, we have
$$
\begin{aligned}
\vec \Theta^{(l)}(G) &= (1-\delta) \mathcal{A}(G) \vec \Theta^{(l-1)}(G) + \delta \vec \Theta^{(l-1)}(G) \\
 &= ((1-\delta)\mathcal{A} + \delta I  )\vec \Theta^{(l-1)}(G)=  \mathcal{\tilde A}(G)  \vec \Theta^{(l-1)}(G)
\end{aligned}
$$
The new aggregation matrix $\mathcal{\tilde A}(G)$ is a stochastic transition matrix as well, which can be seen from,
$$
\sum_j \mathcal{\tilde A}(G)_{ij} = (1-\delta) \sum_j \mathcal{ A}(G)_{ij} + \delta \sum_j I_{ij}= 1
$$
Then we compare the second largest eigenvalue between two transition matrices. Suppose the eigenvalues of the original matrix $\mathcal{A}(G)$ are $\{\lambda_1>\lambda_2> \cdots> \lambda_{n^2} \}$. We already know that the maximum eigenvalue is $\lambda_1=1$, and the converge rate is governed by the second largest eigenvalue $\lambda_2$. Now we consider the second largest eigenvalue $\tilde \lambda_2$ of matrix $\mathcal{\tilde A}$:
$$
\tilde \lambda_2 = (1-\delta) \lambda_2 + \delta = \lambda_2 + \delta(1-\lambda_2) > \lambda_2
$$
\end{proof}

\begin{rem}
The theorem above gives us a certain conclusion about the relationship between $\lambda_2$ and $\tilde{\lambda}_2$. Here, we discuss more about the relationship between $\alpha$ and $\tilde{
\alpha}$. According to \citep{rosenthal1995convergence}, the relationship between  convergence rate $\alpha$ and the second largest eigenvalue $\lambda_2$ of transition matrix $\mathcal{A}(G)$ can be expressed as,

$$
\big|\Theta^{(l)}_{(r)}(u,u') - \vec{\pi}(G)^T  \big( R l \vec{v}  + \vec{v}'\big)\big| \le C l^{J-1} \lambda_2^{l-J} \le C  \alpha^l
$$

where $\alpha > \lambda_2$, and $J > 1$ is the size of the largest Jordan block of $\mathcal{A}(G)$. From the above inequality, we can conclude that the second largest eigenvalue almost governs the convergence rate. However, to maintain rigor, we cannot directly obtain $\alpha < \tilde{\alpha}$. Instead, from the result that $\tilde{\lambda}_2 > \lambda_2$, we say that $\tilde{\alpha}$ is roughly larger than $\alpha$.
\end{rem}

\subsection{Residual connection in MLP} \label{sec:mlp}

\begin{thm} [Convergence rate of GNTK with residual connection between transformations] 
Consider a GNTK of the form (\ref{eq:ntk1}) and (\ref{eq:ntk3}).
If $\mathcal{A}(G)$ is irreducible and aperiodic, with a stationary distribution $\pi(G)$, then there exist constants $0 < \alpha < 1$ and $C >0$, and constant vectors $v$ and $v'$ depending on $R$, such that, 
$
|\Theta^{(l)}_{(r)}(u,u') -\vec{\pi}(G)^T \cdot \big( Rl (1+\frac{\sigma^2_w}{2})^{Rl} \vec{v}  + \vec{v}'\big)| \le C \alpha^l
$.
\end{thm}

\begin{proof}
For residual connection in MLP, we have,
$$
q_r(u) = q_{r-1}(u) + \frac{\sigma^2_w}{2} q_{r-1}(u) = (1+\frac{\sigma^2_w}{2})q_{r-1}(u)
$$
Since $\sigma^2_w > 0$, $q_r(u)$ grows at an exponential rate. 

Now, we turn to compute the correlation term $C_r(u,u')$. For convenience, we suppose $q_r(u) = q_r(u')$. Then,

$$
\begin{aligned}
C_{r+1}(u,u') &= \frac{\Sigma_{r+1}(u,u')}{q_{r+1}(u)} = \frac{1}{1+\frac{\sigma^2_w}{2}}\frac{\Sigma_{r}(u,u')}{q_{r}(u)} + \frac{1}{1+\frac{\sigma_w^2}{2}}\frac{\sigma^2_w}{2}f(C_r(u,u')) \\
& =  \frac{1}{1+\frac{\sigma^2_w}{2}}C_r(u,u') + \frac{\frac{\sigma^2_w}{2}}{1+\frac{\sigma^2_w}{2}} f(C_r(u,u'))
\end{aligned}
$$
Using Taylor expansion of $f$ near 1, as have been done in the proof of Lemma \ref{lem2}, 
$$
f(x)|_{x \rightarrow 1-}  = x + \frac{2\sqrt{2}}{3\pi}(1-x)^{3/2}+O((1-x)^{5/2})
$$
we have,
$$
C_{r+1}(u,u') = C_r(u,u') + \frac{2\sqrt{2}}{3\pi}\frac{\frac{\sigma^2_w}{2}}{1+\frac{\sigma^2_w}{2}} \big[(1-C_r(u,u')^{3/2}+O((1-C_r(u,u')^{5/2}))\big]
$$

Note that it is similar to the case of MLP without residual connection:
$$
C_{r+1}(u,u') = C_r(u,u') + \frac{2\sqrt{2}}{3\pi} \big[(1-C_r(u,u')^{3/2}+O((1-C_r(u,u')^{5/2}))\big]
$$
Following the proof diagram in Lemma \ref{thm0} and Theorem \ref{thm1}, we can obtain the behavior of $\Sigma^{(l)}_{(r)}(u,u')$ and $\Theta^{(l)}_{(r)}(u,u')$ in the large depth limit,
$$
\big |\Sigma^{(l)}_{(r)}(u,u') - \vec{\pi}(G)^T \big(  (1+\frac{\sigma^2_w}{2})^{Rl} \vec{v}  \big) \big | \le C \alpha^l
$$
$$
\big |\Theta^{(l)}_{(r)}(u,u') - \vec{\pi}(G)^T \big( Rl (1+\frac{\sigma^2_w}{2})^{Rl} \vec{v} + \vec{v}'  \big) \big | \le C \alpha^l
$$
\end{proof}

\section{APPENDIX: Discussion on the trainability of GNTK with Crtical DropEdge} \label{sec:critical_dis}

According to the critical percolation theory, there exists a large connected component of order $O(n)$ \citep{erdHos1961strength,erdos2011evolution}, where $n$ is the number of nodes in a graph. This implies that there exists a (connected) graph of size $O(n)$ when we use the critical dropping rate. To preserve information, edges would be resampled at every iteration. This implies that at each iteration during training, the GNN takes a random and large graph. For the whole training process, we can think that we have used all the information conveyed by the graph.

It would be interesting to consider deriving some theoretical results for the limiting behavior of the GNTK in the large depth with Critical DropEdge. Since Critical DropEdge is originated from the random graph theory, a promising approach is to analyze spectral distributions of adjacency matrix and Laplacian matrix with a random graph model, like existing studies \citep{ding2010spectral,chung2011spectra}. 

\section{APPENDIX: Datasets and  Implementation Details} \label{sec:implement}

\subsection{Datasets} \label{sec:dataset}

For node classification tasks, Cora, Citeseer and Pubmed \citep{kipf2017semi}) are three commonly used citation networks, and Physics \citep{shchur2018pitfalls} is a co-author network.
Detailed information of the four benchmark datasets is listed as follows and summarized in Table \ref{tab:dataset}.

\begin{itemize}
    \item  The Cora dataset consists of 2,708 scientific publications classified into one of seven classes, and  5,429 links. Each publication is described by a 0/1-valued word vector indicating the absence/presence of the corresponding word from the dictionary. The dictionary consists of 1,433 unique words.
    
    \item The Citeseer dataset consists of 3,312 scientific publications classified into one of six classes, and 4,732 links. Each publication is described by a 0/1-valued word vector indicating the absence/presence of the corresponding word from the dictionary. The dictionary consists of 3,703 unique words. 
    
    \item The Pubmed Diabetes dataset consists of 19,717 scientific publications from PubMed database pertaining to diabetes classified into one of three classes. The citation network consists of 44,338 links. Each publication is described by a TF-IDF weighted word vector from a dictionary comprised of 500 unique words. 
    \item The Physics dataset consists of 34,493 authors as nodes, and edges indicate whether two authors have co-authored a paper. Node features are paper keywords from the author’s papers. Following \cite{kim2020find}, we reduce the original dimension (6,805 and 8,415) to 500 using PCA. The split is the 20-per-class/30-per-class/rest. The goal of this task is to classify each author’s respective field of study.
\end{itemize}

\begin{table}[h]
    \centering
    \label{dataset}
    \caption{Details of Datasets}
    \vspace{0.2em}
    \scalebox{0.85}{
    \begin{tabular}{cccccccc}
         \toprule
         Dataset  & Nodes & Edges & Classes & Features & Critical Edge Sampling  & Train/Val/Test  \\
         \midrule
          Cora  & 2,708 & 5,429 & 7 & 1,433 & 24.94\% & 0.05/0.18/0.37 \\
        Citeseer  & 3,327 &4,732  & 6 & 3,703 & 35.15\% &  0.04/0.15/0.30\\
         Pubmed  & 19,717 & 44,338& 3 & 500 & 22.23\% & 0.003/0.03/0.05 \\
         Physics  & 34,493 &  247,962 &  5 &  500  & 6.96 \% &  0.003/0.004/0.99 \\
         \bottomrule
    \end{tabular}}
    \label{tab:dataset}
\end{table}

\subsection{Experimental Implementation}
\label{sec:exp_imp}

We use the PyTorch implementation to simulate both infinitely-wide and finitely-wide GCNs: 

\begin{itemize}
    \item 
The {\bf infinitely-wide} GCNs use part of code from \cite{du2019graph}, which is originally adopted for graph classification. We redesign the calculation method of GNTK \citep{du2019graph}\footnote{{\footnotesize We use the implementation of GNTK available at \url{https://github.com/KangchengHou/gntk}}.} according to the formula in Section \ref{sec:gntk} and use it to process node classification tasks. 

\item For {\bf finitely-wide} GNNs with DropEdge \citep{rong2019dropedge}\footnote{We use the DropEdgeimplementation available at \url{https://github.com/DropEdge/DropEdge}.}, we use the code from \citep{rong2019dropedge}, we perform random hyper-parameter search for
each model, and report the case giving the best accuracy on validation set of each benchmark, following the same strategy as \cite{rong2019dropedge}. The difference is that, in each experiment, we fix the edge sampling percentage as $\rho =  \frac{|V|}{2|E|}$, which listed in the last column of Table \ref{tab:dataset}.

\end{itemize}

All codes mentioned above use the MIT license. All experiments are conducted on two Nvidia Quadro RTX 6000 GPUs.

\section{APPENDIX: Additional Experiments and Discussion} \label{sec:addition}

\begin{table*}
\centering
\caption{Comparison results of test accuracy (\%) with different infinite-wide backbones. 
}
\footnotesize
\setlength{\tabcolsep}{1.8pt}\vspace{0.2em}
\begin{tabular}
{lccccccccc}
\toprule
\makecell[c] 
\midrule
\multirow{2}{*}{Dataset} &\multirow{2}{*}{Backbone} &\multicolumn{2}{c}{4-layer}&\multicolumn{2}{c}{8-layer}&\multicolumn{2}{c}{32-layer}&\multicolumn{2}{c}{64-layer}\\
~& &\multicolumn{1}{c}{Orignal}&\multicolumn{1}{c}{C-DropEdge}&\multicolumn{1}{c}{Original}&\multicolumn{1}{c}{C-DropEdge }&\multicolumn{1}{c}{Original }&\multicolumn{1}{c}{C-DropEdge} &\multicolumn{1}{c}{Original }&\multicolumn{1}{c}{C-DropEdge}\\
\midrule
\multirow{2}*{Cora} &GCN& 79.98 &\textbf{79.98} &79.48 &\textbf{80.61} &73.75 &\textbf{75.86} &72.59 & \textbf{74.03} \\
~  &JKNet &78.27 &77.29 &80.26 &80.02 &78.00 & \textbf{79.01} & 75.90 & \textbf{76.75}\\
\midrule
\multirow{2}*{Citeseer} &GCN  &66.38 &\textbf{66.97} &54.42 &\textbf{64.88} &50.72 &\textbf{51.22} & 43.77 & \textbf{46.49}\\
~ &JKNet  &67.04 &\textbf{67.67} & 65.54 &\textbf{66.45} & 54.57  &\textbf{56.39} & 46.71 & \textbf{47.96}\\
\midrule
\multirow{2}*{Pubmed} &GCN  &74.59 & \textbf{74.77} &73.58 & \textbf{76.67} &71.63 &\textbf{74.77} &  66.28 & \textbf{71.72}\\
 ~ &JKNet  &75.30  &74.35 & 76.42 & \textbf{77.17} &75.71 &  \textbf{76.68} &74.81 & \textbf{75.74}\\
\bottomrule
\end{tabular}
\label{tab:result_infinite}
\end{table*}

\subsection{Infinitely-Wide Backbones} \label{sec:infinite-width}

In the main paper, we reported the performance of GCNs with finite-width, here we show more results with respect to infinitely-wide GCNs, as shown in Table \ref{tab:result_infinite}. We apply Critical DropEdge to infinitely-wide backbones on semi-supervised node classification. We consider two backbones, which are GCN \citep{kipf2017semi} and JKNet \citep{xu2018representation}, and the edge preserving percentage  $\rho(G)$ is shown in Table~\ref{tab:dataset}. First, we calculate a GNTK with respect to the graph data and GNN. Then, we apply kernel regression with calculated GNTK to complete node classification task.

Table \ref{tab:result_infinite} summaries the performance on three citation networks. In this table, we report the performance of GCNs with 4/8/32/64 layers. It is shown that, in most cases, using Critical DropEdge (C-DropEdge) can achieve better results than original GNNs in the infinitely-wide case, especially when the model is deep.


\subsection{Performance of Using Different Backbones} \label{sec:backbones}

The results presented in Table \ref{tab:result_final} are the best across different backbones. Table \ref{tab:result} further compares the results of vanilla GNN, DropEdge and C-DropEdge for three different backbones (GCN, JKNet and IncepGCN). From the table we can conclude that Critical DropEdge consistently outperforms DropEdge and vanilla GNNs.



\begin{table*}[t]
\centering
\caption{Comparison results of test accuracy (\%) between C-DropEdge and DropEdge.
}
\footnotesize
\setlength{\tabcolsep}{3pt}
\begin{tabular}
{lccccccc}
\toprule
\multirow{2}{*}{Dataset}& Backbone&\multicolumn{3}{c}{4-layer}&\multicolumn{3}{c}{8-layer}\\
~& Finite &\multicolumn{1}{c}{Original} &\multicolumn{1}{c}{DropEdge}&\multicolumn{1}{c}{C-DropEdge}&\multicolumn{1}{c}{Original}&\multicolumn{1}{c}{DropEdge }&\multicolumn{1}{c}{C-DropEdge }\\
\midrule
\multirow{3}*{Cora} 
 &GCN &$79.8 \pm 1.1$ & $80.4 \pm 1.4$ &$\textbf{82.0} \pm \textbf{1.2} $ & $73.2 \pm 2.7$   & $75.1 \pm 2.4$  & $\textbf{77.3} \pm \textbf{2.5}$  \\
~&JKNet& $81.1 \pm 1.0$ & $82.2 \pm 0.7$ &$ \textbf{82.5} \pm \textbf{0.7}$  & $80.9 \pm 1.2$ &  $82.0 \pm 0.9$ & $\textbf{82.1} \pm \textbf{0.5}$ \\
~&IncepGCN & $80.0 \pm 0.9$ & $80.6 \pm 1.2$   & $\textbf{82.4} \pm \textbf{0.5}$  & $78.6 \pm 1.7$ & $81.2 \pm 1.3$ & $\textbf{82.3} \pm \textbf{0.6}$ \\
\midrule
\multirow{3}*{Citeseer} 
 &GCN& $61.2 \pm 3.0$ &$ 63.7 \pm 2.5$  &$ \textbf{69.0} \pm \textbf{0.8}$ & $50.2 \pm 5.7$ & $ 52.8\pm 5.1$ & $\textbf{54.1} \pm \textbf{5.9}$\\
~&JKNet& $69.6 \pm 1.2$  & $70.2 \pm 1.0$ &  $\textbf{70.3} \pm \textbf{0.7}$ &$70.7 \pm 1.0$ &  $70.5 \pm 1.1$ & $ \textbf{70.8} \pm \textbf{1.2}$ \\
~&IncepGCN & $69.4 \pm 1.5$ &$70.0 \pm 1.0$ &  $\textbf{70.8} \pm \textbf{0.6}$ &$ 69.0 \pm 1.2$ & $70.8 \pm 1.1$ &  $\textbf{70.9} \pm \textbf{0.5}$ \\
\midrule
\multirow{3}*{Pubmed}  
 &GCN& $77.4 \pm 0.7$ & $77.6 \pm 1.4$ & $ \textbf{78.0} \pm \textbf{0.4}$ & $57.2 \pm 8.4$   & $57.5 \pm 6.1$  & $\textbf{58.9} \pm \textbf{6.9}$ \\
~&JKNet& $76.5 \pm 0.9$ &$ 77.1 \pm 1.1$ & $\textbf{77.4} \pm \textbf{1.0}$  & $76.1 \pm 1.4$  &  $76.6 \pm 1.0$ &  $\textbf{76.6} \pm \textbf{0.9}$\\
~&IncepGCN & $76.7 \pm 1.7$  & $77.4 \pm 1.5$ & $\textbf{77.6} \pm \textbf{1.1}$ & $77.5 \pm 1.3$  &$77.3 \pm 1.2$  & $\textbf{77.9} \pm \textbf{1.0}$ \\
\midrule
\multirow{3}*{Physics} 
 &GCN &$ 90.2 \pm 0.9 $ & $ 91.6 \pm 0.8$ &$\textbf{91.9} \pm \textbf{0.7} $ & $ 83. 5 \pm  2.2$   & $ \textbf{86.3} \pm \textbf{1.8}$  & $85.2 \pm 2.3$  \\
~&JKNet& $90.6 \pm 1.7$ & $ 91.0 \pm 0.9$ &$ \textbf{91.5} \pm \textbf{0.4}$  & $ 90.4\pm 1.5 $ &  $ 91.5 \pm 0.7 $ & $\textbf{91.7} \pm \textbf{0.6}$ \\
~&IncepGCN & $ 91.0 \pm 1.1 $ & $ 91.4 \pm 0.5$   & $\textbf{91.5} \pm \textbf{0.3}$  & $ 90.5 \pm 0.8$ & $91.4 \pm 0.8$ & $\textbf{91.5} \pm \textbf{0.6}$ \\
\midrule
\midrule
\multirow{2}{*}{Dataset} & Backbone&\multicolumn{3}{c}{16-layer}&\multicolumn{3}{c}{32-layer}\\
~& Finite &\multicolumn{1}{c}{Original} &\multicolumn{1}{c}{DropEdge}&\multicolumn{1}{c}{C-DropEdge}&\multicolumn{1}{c}{Original}&\multicolumn{1}{c}{DropEdge }&\multicolumn{1}{c}{C-DropEdge }\\
\midrule
\multirow{3}*{Cora} 
 &GCN &$ 36.3 \pm 13.8 $ & $ 55.1 \pm 5.2$ &$\textbf{58.5} \pm \textbf{3.9} $ & $20.1 \pm 2.4$ & $22.1 \pm 2.0$  &  $\textbf{24.7} \pm \textbf{1.8}$  \\
~&JKNet& $79.9 \pm 1.6 $ & $ 82.2 \pm  0.7$ & $ \textbf{82.4} \pm \textbf{0.8}$  & $80.4 \pm 1.4 $ &  $ 82.1\pm 0.5 $ &  $  \textbf{82.6} \pm  \textbf{0.9} $ \\
~&IncepGCN &$78.7 \pm 1.0$ &$80.2 \pm 1.3$ & $ \textbf{82.0} \pm \textbf{0.8}$ & $78.5 \pm 1.8$ & $80.9 \pm 0.8$ & $\textbf{81.6} \pm \textbf{0.9}$\\
\midrule
\multirow{3}*{Citeseer} 
 &GCN &$30.8 \pm 2.2$ & $35.7 \pm 1.9$ & $\textbf{36.6} \pm \textbf{2.0}$ & $21.7 \pm 3.0$  & $23.1 \pm 1.0$ & $\textbf{25.2} \pm \textbf{0.9}$ \\
~&JKNet& $69.2 \pm 1.2$ &$68.8 \pm 1.6$ & $ \textbf{69.5} \pm \textbf{1.0}$ &$68.1 \pm 2.3$ & $69.9 \pm 1.4$ & $\textbf{70.1} \pm \textbf{0.6}$ \\
~&IncepGCN &$68.4 \pm 1.2$ & $70.7 \pm 1.0$  & $\textbf{71.0} \pm \textbf{1.0}$  & $68.6 \pm 1.9$ & $70.2 \pm 0.8$ & $ \textbf{70.7} \pm \textbf{0.9}$   \\
\midrule
\multirow{3}*{Pubmed}  
 &GCN& $39.5 \pm 10.3$ & $37.0 \pm 9.6$ & $\textbf{42.6} \pm \textbf{6.4}$  & $36.3 \pm 8.4$  & $37.4 \pm 8.8$ & $\textbf{38.5} \pm \textbf{6.1}$\\
~&JKNet & $76.6 \pm 0.9$  & $76.1 \pm 0.8$ & $\textbf{76.9} \pm \textbf{0.9}$   &$77.1 \pm 0.8$  & $77.0 \pm 0.9$ & $\textbf{77.2} \pm \textbf{1.0}$ \\
~&IncepGCN & $76.5 \pm 1.5 $  & $76.7 \pm 1.3$ & $\textbf{77.2} \pm \textbf{1.0}$ & $77.0 \pm 1.4$ & $77.2 \pm 1.3$ & $\textbf{77.8} \pm \textbf{1.0}$\\
\midrule
\multirow{3}*{Physics} 
 &GCN &$  41.6 \pm 6.2 $ & $ 45.8 \pm 6.3 $ &$\textbf{46.2} \pm \textbf{5.7} $ & $ 28.8 \pm 9.4 $ & $ 31.1 \pm 8.8 $  &  $\textbf{36.2} \pm \textbf{8.4}$  \\
~&JKNet& $  90.3 \pm 1.0 $ & $ 91.2 \pm  0.5$ & $ \textbf{91.5} \pm \textbf{0.4}$  & $ 90.6 \pm 1.0 $ &  $ 91.3 \pm 0.8 $ &  $  \textbf{91.6} \pm  \textbf{0.6} $ \\
~&IncepGCN &$  91.4 \pm 0.4 $ &$ 92.0 \pm 0.5 $ & $ \textbf{92.0} \pm \textbf{0.4}$ &  OOM & OOM & OOM\\
\bottomrule
\end{tabular}
\flushleft{~~~~~~~~OOM means Out-Of-Memory.}
\label{tab:result}
\end{table*}



\begin{figure*}
    \centering
    \includegraphics[width=0.9\textwidth]{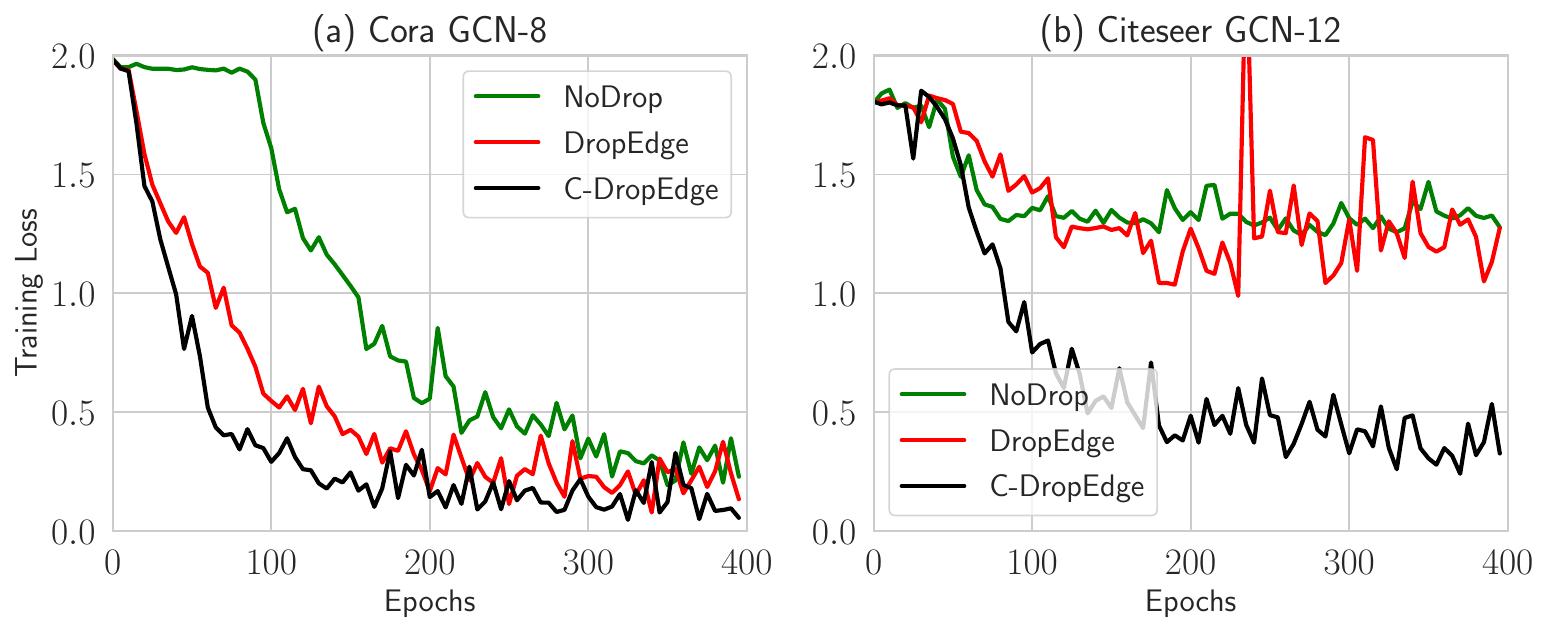}
    \vspace{-0.3em}
    \caption{Training loss as a function of epochs. (a) We implement 8-layer GCN, GCN with DropEdge, GCN with C-DropEdge on Cora. (b) 12-layer GCN, GCN with DropEdge, GCN with C-DropEdge on Citeseer. The training loss of GCN and GCN with DropEdge has a slower convergence rate and converges to a larger error compared to GCN with C-DropEdge.}
    \vspace{-0.3em}
    \label{fig:train_loss}
\end{figure*}

\subsection{Trainability of GNTK with Critical DropEdge}

We implement deep GCNs with/without DropEdge and C-DropEdge to compare the training performance. The results are shown in Figure~\ref{fig:train_loss}. The training loss of GCN and GCN with DropEdge has a slower convergence rate and converges to a larger error compared to GCN with C-DropEdge. This indicates that C-DropEdge can achieve better trainability compared to GCN and GCN with DropEdge. Besides, the convergence results of GNTKs with C-DropEdge are shown in Figure~\ref{fig:gntk_drop}. Compared to residual connection, we find that the normalized GNTK elements would not converge to a single value, and the convergence rate curve does not depend on the depth. This implies that residual connection mitigates the trainability loss by slowing down the exponential convergence rate, while C-DropEdge directly changes the convergence results.

\begin{figure*}
    \centering
    \includegraphics[width=0.9\textwidth]{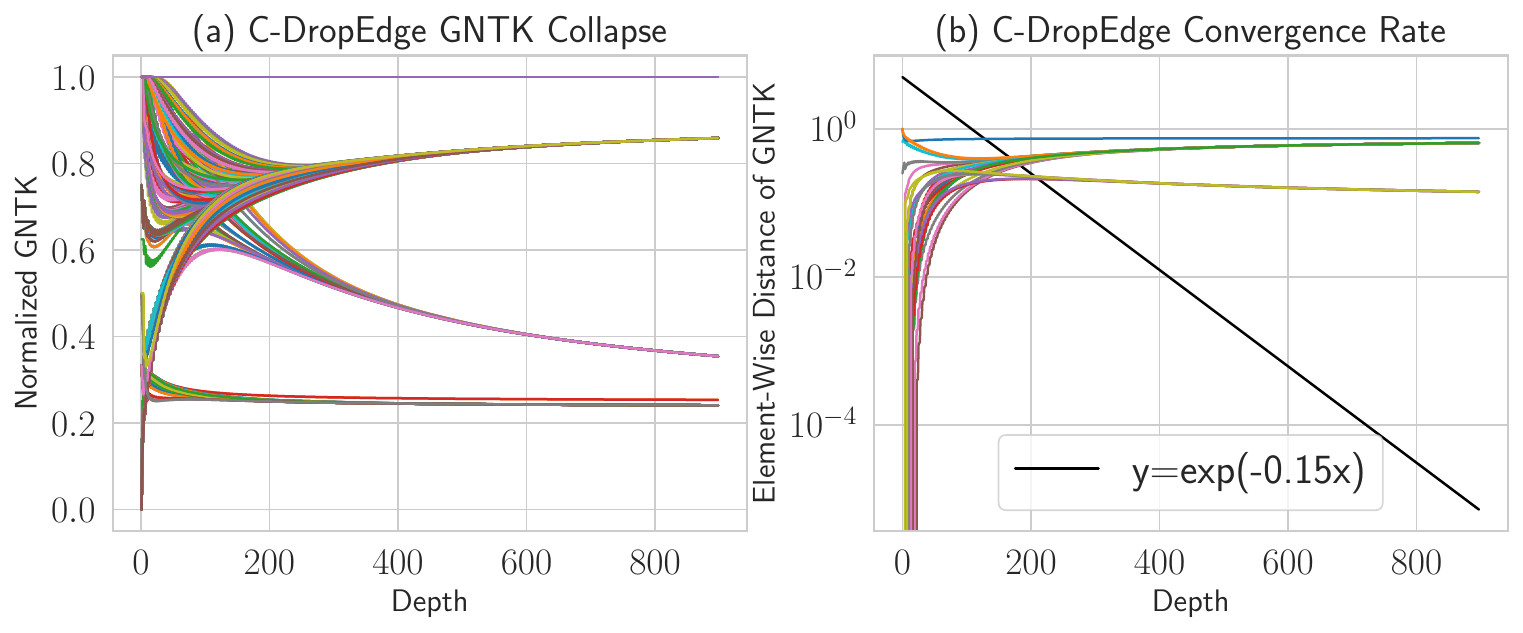}
    \vspace{-0.2em}
    \caption{Convergence rate of GNTK with C-DropEdge. (a) Elements of the normalized (residual connection) GNTK as a function of the depth, defined as $Rl + r$. Different elements tend to have different value as depth grows (b) The element-wide distance of the normalized GNTK as a function of the depth. The converge rate is no longer bounded by an exponential function. Instead, it remains parallel to the horizontal depth axis.}
    \vspace{-0.2em}
    \label{fig:gntk_drop}
\end{figure*}


\end{document}